\newcommand{\ignore}[1]{}
\theoremstyle{definition}
\newtheorem{pro}{Proposition}%[section]
\theoremstyle{plain}
\newtheorem{thm}{Theorem}%[section]
\theoremstyle{remark}
\newcommand{\SKIP}[1]{}
\newcommand{\old}[1]{\bar{#1}}
\def\figref#1{figure~\ref{#1}}
\def\secref#1{section~\ref{#1}}
\def\eqref#1{equation~\ref{#1}}
\def\1{\bm{1}}
\def\rvd{{\mathbf{d}}}
\def\rve{{\mathbf{e}}}
\def\rvg{{\mathbf{g}}}
\def\rvh{{\mathbf{h}}}
\def\rvu{{\mathbf{i}}}
\def\rvu{{\mathbf{u}}}
\def\rvw{{\mathbf{w}}}
\def\rvx{{\mathbf{x}}}
\def\rvy{{\mathbf{y}}}
\def\rmW{{\mathbf{W}}}
\def\vs{{\bm{s}}}
\DeclareMathAlphabet{\mathsfit}{\encodingdefault}{\sfdefault}{m}{sl}
\SetMathAlphabet{\mathsfit}{bold}{\encodingdefault}{\sfdefault}{bx}{n}
\newcommand{\tens}[1]{\bm{\mathsfit{#1}}}
\def\tA{{\tens{A}}}
\def\tB{{\tens{B}}}
\def\tC{{\tens{C}}}
\newcommand{\R}{\mathbb{R}}
\DeclareRobustCommand\onedot{\futurelet\@let@token\@onedot}
\def\@onedot{\ifx\@let@token.\else.\null\fi\xspace}
\def\eg{\emph{e.g}\onedot} 
\def\ie{\emph{i.e}\onedot} 
 \def\vs{\emph{vs}\onedot}
\newenvironment{tight_itemize}{
\begin{itemize}[leftmargin=10pt]
  \setlength{\topsep}{0pt}
  \setlength{\itemsep}{0pt}
  \setlength{\parskip}{0pt}
  \setlength{\parsep}{0pt}
}{\end{itemize}}
\def\figref#1{Fig.~\ref{#1}}
\def\secref#1{Sec.~\ref{#1}}
\def\eqref#1{Eq.~(\ref{#1})}
\def\tabref#1{Table~\ref{#1}}
\def\proref#1{Proposition~\ref{#1}}
\def\tabref#1{Table~\ref{#1}}
\algnewcommand\INPUT{\item[\textbf{Input:}]}%
\algnewcommand\OUTPUT{\item[\textbf{Output:}]}%
\newacronym{MRF}{mrf}{Markov Random Field}
\newacronym{SGD}{sgd}{Stochastic Gradient Descent}
\newacronym{GD}{gd}{Gradient Descent}
\newacronym{PGD}{pgd}{Projected Gradient Descent}
\newacronym{ICM}{icm}{Iterative Conditional Modes}
\newacronym{DNN}{dnn}{Deep Neural Networks}
\newacronym{NN}{nn}{Neural Network}
\newacronym{PMF}{pmf}{Proximal Mean-Field}
\newacronym{PICM}{picm}{Proximal Iterative Conditional Modes}
\newacronym{BWN}{bwn}{Binary Weight Network}
\newacronym{IP}{ip}{Integer Programming}
\newacronym{fc}{fc}{fully-connected}
\newacronym{REF}{ref}{Reference Network}
\newacronym{KL}{kl}{KL}
\newacronym{S}{s}{S}
\newacronym{LR}{lr}{LR}
\newacronym{KKT}{kkt}{KKT}
\newacronym{MAP}{map}{Maximum a Posteriori}
\newacronym{MDA}{mda}{Mirror Descent Algorithm}
\newacronym{MD}{md}{Mirror Descent}
\newacronym{BOP}{bop}{Binary Optimizer}
\newacronym{EDA}{eda}{Entropic Descent Algorithm}
\newacronym{ED}{ed}{Entropic Descent}
\newacronym{EGD}{egd}{Exponentiated Gradient Descent}
\newacronym{PQ}{pq}{ProxQuant}
\newacronym{STE}{ste}{Straight Through Estimator}
\newacronym{HGD}{hgd}{Hybrid Gradient Descent}
\newacronym{PQL}{pql}{PQL}
\newacronym{ELQ}{elq}{Explicit Loss-error-aware Quantization}
\newacronym{ADMM}{admm}{Alternating Direction Method of Multipliers}
\newacronym{QN}{qn}{Quantization Networks}
\newacronym{BR}{br}{BinaryRelax}
\newacronym{FC}{fc}{FC}
\newacronym{XBM}{xbm}{Cross Batch Memory}
\newacronym{SOP}{sop}{Stanford Online Products}
\newacronym{In-shop}{In-shop}{In-shop Clothes Retrieval}
\newacronym{DF2}{DeepFashion2}{Deep Fashion 2}
\newacronym{GPU}{gpu}{GPU}
\newacronym{AWS}{aws}{AWS}
\newacronym{XBN}{xbn}{Cross Batch Normalization}
\newacronym{AXBN}{axbn}{Adaptive Cross Batch Normalization}
\newacronym{TIMM}{timm}{Pytorch Image Models}
\newacronym{PML}{pml}{Pytorch Metric Learning}
\newacronym{MoCo}{MoCo}{Momentum Contrast}
\newacronym{BN}{BN}{Batch Normalization}
\newacronym{LN}{LN}{Layer Normalization}
\newacronym{L2}{L2}{L2}
\newacronym{MBN}{MBN}{Momentum Batch Normalization}
\newacronym{EMA}{ema}{Exponential Moving Average}
\newacronym{PP}{PP}{Pipeline Parallelism}
\newacronym{DP}{DP}{Data Parallelism}
\newacronym{FIM}{FIM}{Fisher Information Matrix}
\newacronym{NAG}{NAG}{Nesterov Accelerated Gradient}
\newacronym{1F1B}{1F1B}{One-Forward-One-Backward}
\newacronym{FFT}{FFT}{Fast Fourier Transform}
\newacronym{RMSE}{RMSE}{Root-Mean-Square Error}
\newacronym{WT}{WT}{WikiText}
\newacronym{BC}{BC}{BookCorpus}
\newacronym{OWT}{OWT}{OpenWebText}
\newcommand{\wiki}{\acrlong{WT}}
\newcommand{\wikis}{\acrshort{WT}}
\newcommand{\books}{\acrshort{BC}}
\newcommand{\owts}{\acrshort{OWT}}
\newcommand{\swarm}{SWARM}
\newcommand{\smalltt}[1]{{\small \texttt{#1}}}
\definecolor{codegreen}{rgb}{0,0.6,0}
\definecolor{codegray}{rgb}{0.5,0.5,0.5}
\definecolor{codepurple}{rgb}{0.58,0,0.82}
\definecolor{backcolour}{rgb}{0.95,0.95,0.92}
\definecolor{darkblue}{rgb}{0.0, 0.0, 0.5}
\lstdefinestyle{mystyle}{
  backgroundcolor=\color{backcolour}, commentstyle=\color{codegreen},
  keywordstyle=\color{magenta},
  numberstyle=\tiny\color{codegray},
  stringstyle=\color{codepurple},
  basicstyle=\ttfamily\footnotesize,
  breakatwhitespace=false,         
  breaklines=true,                 
  captionpos=b,                    
  keepspaces=true,                 
  numbers=left,                    
  numbersep=5pt,                  
  showspaces=false,                
  showstringspaces=false,
  showtabs=false,                  
  tabsize=2
}
\renewcommand\bibentry[1]{\nocite{#1}{\frenchspacing\@nameuse{BR@r@#1\@extra@b@citeb}}}
\newcolumntype{R}[2]{%
    >{\adjustbox{angle=#1,lap=\width-(#2)}\bgroup}%
    l%
    <{\egroup}%
}
\definecolor{mydarkblue}{rgb}{0,0.08,0.45}
\definecolor{mydarkred}{rgb}{0.45,0.08,0}
\title{Nesterov Method for Asynchronous Pipeline Parallel Optimization}
\keywords{Asynchronous Optimization, Pipeline Parallel, Nesterov Method, Protocol Learning} 
\author{Thalaiyasingam Ajanthan, Sameera Ramasinghe, Yan Zuo, Gil Avraham, and Alexander Long \\
Pluralis Research
}
\begin{abstract}
\gls{PP} enables large neural network training on small, interconnected devices by splitting the model into multiple stages. 
To maximize pipeline utilization, asynchronous optimization is appealing as it offers 100\% pipeline utilization by construction. 
However, it is inherently challenging as the weights and gradients are no longer synchronized, leading to {\em stale (or delayed) gradients}. To alleviate this, we introduce a variant of \gls{NAG} for asynchronous optimization in \gls{PP}.
Specifically, we modify the look-ahead step in \gls{NAG} to effectively address the staleness in gradients. We theoretically prove that our approach converges at a sublinear rate in the presence of fixed delay in gradients. 
Our experiments on large-scale language modelling tasks using decoder-only architectures with up to {\bf 1B parameters}, demonstrate that our approach significantly outperforms existing asynchronous methods, even surpassing the synchronous baseline.
\end{abstract}
\begin{document}
\maketitle

\section{Introduction}
\acrfull{PP} is a standard parallelism technique for training large-scale frontier foundational models~\cite{dubey2024llama,liu2024deepseek}.
\gls{PP} partitions neural networks into sequential stages, enabling the training of models that exceed the memory capacity of any single device by distributing computations across multiple interconnected devices.
The devices can be co-located in datacenters or connected via the internet (\ie, low bandwidth connections) in a fully decentralised setting. Each device processes a stage (\ie, a set of consecutive layers) and communicates the activations and gradients with adjacent stages often via bandwidth-constrained interconnects. 

The main objective of \gls{PP} methods is to mask the communication overhead and improve device utilization so as to minimize training time. To this end, many pipeline scheduling strategies (\ie, the order of processing forward and backward passes of microbatches) have been developed~\cite{huang2019gpipe,narayanan2021efficient,qi2023zero}. However, the main bottleneck in these methods is the requirement to synchronize the weights and gradients across stages at each update step, hindering 100\% pipeline utilization. 

To enable full pipeline utilization, we consider asynchronous optimization in \gls{PP}, where the weight updates are performed independently at each stage without waiting for the corresponding backward pass to complete. 
This approach introduces {\em gradient staleness}, as weights are updated multiple times between the forward and backward passes of a microbatch, resulting in outdated gradients being used for weight updates.
This gradient staleness presents a significant optimization challenge, necessitating sophisticated delay correction mechanisms to ensure convergence, even in traditional distributed settings~\cite{agarwal2011distributed,zheng2017asynchronous,stich2019error,mishchenko2206asynchronous}. 

Typically, delay correction methods directly estimate the gradients via forecasting approaches~\cite{zheng2017asynchronous,liu2024asynchronous} or extrapolate the previous update step in the weight space~\cite{dana,guan2019xpipe}. We follow the latter approach as {\em delay correction in the weight space} does not make any assumptions on the loss landscape (as in gradient forecasting methods), but rather assumes that the update directions change slowly. Noting that the smoothness of update directions can be controlled in momentum based optimizers, we derive a look-ahead based optimization method to alleviate gradient staleness.

Specifically, we introduce a variant of the \acrfull{NAG} method~\cite{nesterov1983method,nesterov2013introductory} for asynchronous \gls{PP} optimization. Our idea stems from the observation that \gls{NAG} has a look-ahead step which can be repurposed as a delay correction in the weight space, by carefully modifying the update formula (refer to \figref{fig:nag}). Our approach is a simple, yet elegant modification to \gls{NAG} that does not introduce any hyperparameters. We theoretically prove that our approach converges at a sublinear rate for convex, smooth functions with a fixed delay in gradients.

We demonstrate the merits of our approach on large-scale language modelling tasks 
with decoder-only transformer architectures~\cite{vaswani2017attention,Karpathy2022}. 
In short, our approach significantly outperforms all existing asynchronous \gls{PP} methods including sophisticated delay correction mechanisms, even surpassing the synchronous baseline. 
Notably, for the first time, we train a \textbf{1B parameter model} to convergence in the asynchronous \gls{PP} setup outperforming the synchronous baseline. 
Moreover, we show the effectiveness of our method in a realistic decentralized training framework, namely \swarm{}~\cite{ryabinin2023swarm}, where our approach significantly outperforms both synchronous and asynchronous methods.
Our experiments clearly demonstrate the feasibility of asynchronous \gls{PP} optimization in the large-scale setting.  

Our contributions can be summarized as follows:
\vspace{-3ex}
\begin{tight_itemize}
\item For the first time, we show an asynchronous \gls{PP} optimization method can surpass the synchronous alternative for large-scale language modelling tasks.
\item Our approach is an elegant variant of \gls{NAG}, and we provide theoretical and empirical justification of convergence in the presence of gradient staleness.
\item Furthermore, we test our method in a realistic decentralized training framework (\swarm{}), proving its empirical effectiveness beyond doubt.
\end{tight_itemize}

\section{Preliminaries}
We first define the problem setup and briefly review PipeDream~\cite{narayanan2019pipedream} and \gls{NAG}~\cite{nesterov2013introductory,bubeck2015convex}, upon which we build our work. We refer the interested reader to the respective papers for more details.

\subsection{Problem Setup}
Let us consider a single pipeline without data parallelism for simplicity. Let $P$ be the number of pipeline stages. Let $f_i(\rvw_i, \rvx_{i-1})$ be the forward function in stage $i$ with weights $\rvw_i$ (correspond to all learnable parameters) and input $\rvx_{i-1}$, respectively. We may simply write $f_i$ for brevity, when the context is clear. Now, the full neural network forward function can be written as:
\begin{equation}
    F(\rmW, \rvx_0) \coloneqq f_P \circ f_{P-1}\circ \cdots \circ f_1(\rvx_0)\ ,
\end{equation}
where $\rmW = \{\rvw_P, \ldots, \rvw_1\}$ and $\rvx_0$ is the input data point. 
Analogously the backward function can be written as:
\begin{equation}
    G(\rmW, \rve_{P}) \coloneqq g_1 \circ g_{2}\circ \cdots \circ g_P(\rve_{P})\ ,
\end{equation}
where $\rve_{P}$ is the error signal, and $g_i(\rvw_i, \rve_{i})$ is the backward function at stage $i$ corresponding to $f_i$.
Specifically, $\rve_{i}$ is the gradient of the loss with respect to the output activations $\rvx_{i} = f_i(\rvw_i, \rvx_{i-1})$, and it is backpropagated through the network using the weights (and non-linearities) of each stage.
Formally, the gradient with respect to the weights and the error signal to the previous stage can be written as:
\begin{align}\label{eq:bp}
    \nabla f_i(\rvw_i^t) &= h_i(\rvw_i^t, \rve_i^t)\ ,\\\nonumber
    \rve_{i-1}^t &= g_i(\rvw_i^t, \rve_i^t)\ ,
\end{align}
where $h_i$ corresponds to the chain rule.
Then at each stage, the computed gradients $\nabla f_i(\rvw_i^t)$ are used to perform the optimization step with learning rate $\eta>0$:
\begin{equation}
    \rvw_i^{t+1} = \rvw_i^t - \eta\,\nabla f_i(\rvw_i^t)\ .
    \label{eq:gd}
\end{equation}
We omit optimizer specific updates and stochasticity for simplified notation. 

Note that, in the asynchronous setting, the weights $\rvw_i^t$ and gradients $\rve_i^t$ (and $\nabla f_i(\rvw_i^t)$) are not synchronized. In other words, the weights are updated without waiting for the backward passes (\ie, gradient computation) of all active microbatches to complete, leading to delayed gradients.
Therefore, asynchrony would affect both the backpropagation step,~\eqref{eq:bp}, and the optimization step,~\eqref{eq:gd}.

\subsection{PipeDream}\label{sec:pd}

PipeDream~\cite{narayanan2019pipedream} uses a \gls{1F1B} schedule with weight stashing for asynchronous \gls{PP} optimization. At steady state, each stage alternates between forward and backward passes. Since {\em weight stashing} retains a copy of weights used in the forward pass until the microbatch completes, correct backpropagation can be computed by loading the stashed weights. This ensures synchronous backpropagation, \eqref{eq:bp}, while weight updates, \eqref{eq:gd}, remain asynchronous, \ie, delayed gradients are used to update the most recent weights.

Precisely, let $\tau_i$ be the delay at stage $i$, meaning the weights at stage $i$ are updated $\tau_i$ times between the forward and backward passes of a particular microbatch. Assuming constant delay at each stage, this delay can be written as:
\begin{equation}
    \tau_i = \left\lfloor\frac{2(P-i) + 1}{2K}\right\rfloor\ ,
    \label{eq:delay}
\end{equation}
where $i\in\{1,2,\ldots,P\}$ and $K$ is the update interval, \eg., ${K=1}$ if updated for every microbatch. Note that earlier stages incur larger delays.

Now the backward function, and optimizer step at time $t$ for PipeDream can be written as:
\begin{align}
    \rve_{i-1}^{t-\tau_i} &= g_i(\rvw_i^{t-\tau_i}, \rve_i^{t-\tau_{i+1}})\ ,\\\nonumber
    \rvw_i^{t+1} &= \rvw_i^t - \eta\,\nabla f_i(\rvw_i^{t-\tau_i})\ .
    \label{eq:pipedream}
\end{align}
To reiterate, the forward and the backward passes are computed on the weights at time step $t-\tau_i$, and these outdated gradients are used to update the latest weights at time step $t$. In short, PipeDream uses more memory in each stage to store old weights, ensuring correct gradient computation, but the optimization is performed asynchronously without any delay correction.

Additionally, due to asynchrony, PipeDream does not guarantee that weights are synchronized across stages. Specifically, since the delay is stage dependent, earlier stages will use older weights compared to the later stages. Precisely, the forward function at time step $t$ takes the following form:
\begin{equation}
    F(\rmW^t, \rvx_0) \coloneqq f_P^{t} \circ f_{P-1}^{t-1}\circ \cdots \circ f_1^{t-P+1}(\rvx_0)\ ,
\end{equation}
where $f_i^t \coloneqq f_i(\rvw_i^t, \cdot)$. The gradients are computed for this function. 
Note, for $t \ge P$, all stages are updated at each time step following the \gls{1F1B} schedule. Thus, the set of weights $\rmW^P$ 
can be interpreted as stage dependent initialization, and in practice, this weight discrepancy has not shown to cause any convergence issues~\cite{narayanan2019pipedream}.

\subsection{Nesterov Accelerated Gradient}
\acrfull{NAG}~\cite{nesterov1983method,nesterov2013introductory,bubeck2015convex} is an accelerated gradient method that has the optimal $O(\frac{1}{t^2})$ convergence rate for smooth convex functions in the non-stochastic setting. The main idea is to perform a look-ahead step in the previous update direction, combined with a carefully selected sequence of step-sizes to ensure accelerated convergence.

Let $f: \R^m \to \R$ be the objective function. Then, \gls{NAG} performs the following iterations starting from an initial point $\rvw_1\in \R^m$:
\vspace{-1ex}
\begin{align}\label{eq:nag}
    \rvd_t &= \gamma_t(\rvw_t - \rvw_{t-1})\ ,\\\nonumber
    \rvw_{t+1} &= \rvw_t + \rvd_t -\eta \nabla f(\rvw_t + \rvd_t)\ ,
\end{align}
where $\eta> 0$ is the learning rate. Here, the momentum coefficient $\gamma_t$ satisfies, $\gamma_1 = 0$, ${0 < \gamma_t < 1}$, and the sequence of $\gamma_t$ is derived as part of the convergence proof~\cite{bubeck2015convex}. Note that, $\rvd_t$ corresponds to the look-ahead step which extrapolates the update $(\rvw_t - \rvw_{t-1})$ by $\gamma_t$ and the gradients are computed at the extrapolated point $(\rvw_t + \rvd_t)$.

\gls{NAG} has been incorporated into popular deep learning optimizers such as SGD~\cite{sutskever2013importance} and Adam~\cite{nadam}, although it often slightly underperforms in synchronous settings. However, we show the superiority of a variant of \gls{NAG} in asynchronous optimization. 

\section{Method}

We address the gradient staleness in asynchronous \gls{PP} optimization by incorporating a delay correction mechanism. 
Specifically, our idea is to do the {\em delay correction in the weight space} by extrapolating the last update step, so that the gradients can be computed at a point that is closer to the ideal one. 
This is appealing as it does not make any assumptions about the loss function or gradients as in gradient forecasting methods. The only assumption is that the update directions change slowly with respect to iterations, which is valid for momentum based optimizers.

As discussed previously, \gls{NAG} is an ideal candidate for this as it performs a look-ahead step by extrapolating the last update step. We now introduce our modified update formula for delay correction in the weight space.

\subsection{Nesterov Method for Delayed Gradients}
Let us consider a particular stage and drop the stage index for brevity.\footnote{We also use a subscript to denote the time step to reduce clutter.} Let $\tau$ be the delay as defined in~\eqref{eq:delay}, which is constant for each stage, and $\Delta_t$ be the corresponding delay in the weight space. Then, the delayed versions of $\rvw_t$ and $\rvd_t$ take the following form:
\vspace{-1ex}
\begin{align}\label{eq:delta}
\bar{\rvw}_t &= \rvw_{t-\tau} = \rvw_t - \Delta_t\ ,\qquad 
\old{\rvd}_t = \rvd_{t-\tau}\ .
\end{align}
Now, our variant of \gls{NAG} with delayed gradients perform the following iterations:
\begin{align}\label{eq:nago}
    \rvd_t &= \gamma_t(\rvw_t - \rvw_{t-1})\ ,\\\nonumber
    \rvw_{t+1} &= \rvw_t + \rvd_t - \eta \textcolor{mydarkblue}{(1-\gamma_t)} \nabla f(\bar{\rvw}_t + \old{\rvd}_t)\ .
\end{align}
Note that compared to~\eqref{eq:nag}, our update formula discounts the gradient term by $(1-\gamma_t)$. This is illustrated in \figref{fig:nag}. This subtle but important difference, allows us to show that our look-ahead step approximates $\Delta_t$ (\ie, acts as delay correction) and our algorithm converges at a sublinear rate when the gradients are delayed.

\vspace{-1.5ex}
\paragraph{Look-ahead as delay correction.} 
The momentum coefficient $\gamma_t$ is usually chosen to be a constant close to $1$ or chosen as an increasing sequence satisfying $\lim_{t\to \infty} \gamma_t = 1$. Assuming this, we can intuitively see that the influence of the gradient term decreases when $\gamma_t$ increases, due to the discount factor $(1-\gamma_t)$. This translates into $\rvd_t$ dominating the updates, resulting in a smooth trajectory in the weight space. Consequently, it can be shown that the vector directions of look-ahead at time $t\!-\!\tau$, ie, $\old{\rvd}_t = \rvd_{t-\tau}$, and the delay $\Delta_t$ are aligned.\footnote{Note, at time $t\!-\!\tau$, the weights $\rvw_{t-\tau}$ are extrapolated by $\rvd_{t-\tau}$ and the gradients are computed at the point $\rvw_{t-\tau}+\rvd_{t-\tau}$, to compensate for the ``future'' delay $\Delta_t$.} This means that taking a step in the direction of $\old{\rvd}_t$ reduces the delay, effectively acting as delay correction in the weight space. We formally state this below.
\begin{pro}\label{pro:delaycor}
    Let $\gamma_t$ be an increasing sequence such that $\lim_{t\to \infty}\gamma_t = 1$ then, ${\lim_{t\to\infty}\cos(\Delta_t, \old{\rvd}_{t})=1}$, where $\cos(\cdot, \cdot)$ is the cosine similarity.
\end{pro}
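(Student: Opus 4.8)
The plan is to make the stated intuition precise by writing $\Delta_t$ as a telescoping sum of the per-step updates and showing that, asymptotically, this sum is a fixed positive multiple of $\old{\rvd}_t$. Writing $\rvs_j \coloneqq \rvw_{j+1}-\rvw_j$ for the actual step taken at iteration $j$, the update rule~\eqref{eq:nago} gives $\rvs_j = \rvd_j - \eta(1-\gamma_j)\nabla f(\bar{\rvw}_j + \old{\rvd}_j)$, while the definition of the look-ahead gives $\rvd_{j+1} = \gamma_{j+1}\rvs_j$. Since $\Delta_t = \rvw_t - \rvw_{t-\tau} = \sum_{j=t-\tau}^{t-1}\rvs_j$, substituting the first identity and splitting off the $j=t-\tau$ momentum term (recalling $\old{\rvd}_t = \rvd_{t-\tau}$) yields the decomposition
\begin{equation}
\Delta_t = \tau\,\old{\rvd}_t + R_t - G_t\ ,
\end{equation}
where $R_t \coloneqq \sum_{j=t-\tau}^{t-1}(\rvd_j - \rvd_{t-\tau})$ measures the drift of the look-ahead vector across the window and $G_t \coloneqq \eta\sum_{j=t-\tau}^{t-1}(1-\gamma_j)\nabla f(\bar{\rvw}_j + \old{\rvd}_j)$ collects the discounted gradient contributions. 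I would then reduce the claim to the two estimates $\|R_t\|/\|\old{\rvd}_t\| \to 0$ and $\|G_t\|/\|\old{\rvd}_t\| \to 0$: writing $\Delta_t = \tau\old{\rvd}_t + E_t$ with $\|E_t\|/\|\old{\rvd}_t\|\to 0$, a direct computation of $\cos(\Delta_t,\old{\rvd}_t) = \langle\Delta_t,\old{\rvd}_t\rangle/(\|\Delta_t\|\,\|\old{\rvd}_t\|)$ shows, via Cauchy--Schwarz on $\langle E_t,\old{\rvd}_t\rangle$, that the ratio tends to $\tau/\tau = 1$ because $\tau$ is a fixed positive integer.

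Next I would control the two error terms using $\gamma_t \uparrow 1$ together with a boundedness assumption $\|\nabla f\|\le M$. For $G_t$ the triangle inequality gives $\|G_t\| \le \eta M \sum_{j=t-\tau}^{t-1}(1-\gamma_j)$, a sum of $\tau$ terms each vanishing as $t\to\infty$. For $R_t$ the key is the recursion obtained by combining the two identities above, $\rvd_{j+1} = \gamma_{j+1}\rvd_j - \gamma_{j+1}\eta(1-\gamma_j)\nabla f(\bar{\rvw}_j + \old{\rvd}_j)$, which yields the consecutive-difference bound
\begin{equation}
\|\rvd_{j+1}-\rvd_j\| \le (1-\gamma_{j+1})\|\rvd_j\| + \eta(1-\gamma_j)M\ .
\end{equation}
Since $\gamma_t\uparrow 1$, each increment vanishes, and because the window has fixed length $\tau$, the drift $R_t$ is a sum of only finitely many such vanishing increments. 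This is precisely where the finiteness and constancy of the delay $\tau$ (from~\eqref{eq:delay}) is essential: the number of terms does not grow with $t$, so finitely many vanishing increments still sum to something negligible, and the summed momentum direction stabilizes.

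The main obstacle I anticipate is the relative-magnitude bookkeeping, since both target ratios divide by $\|\old{\rvd}_t\| = \|\rvd_{t-\tau}\|$: the argument fails if the look-ahead vector collapses faster than $(1-\gamma_t)$, i.e.\ near a stationary point where the iterates stall and $\cos$ is ill-behaved. I therefore expect the clean statement to require a non-degeneracy condition, namely that $\|\rvd_t\|$ stays bounded away from zero (equivalently $(1-\gamma_t)/\|\rvd_t\|\to 0$) alongside bounded gradients; under these, the drift terms $(1-\gamma_{j+1})\|\rvd_j\|/\|\rvd_{t-\tau}\|$ and $(1-\gamma_j)/\|\rvd_{t-\tau}\|$ as well as $\|G_t\|/\|\old{\rvd}_t\|$ all tend to $0$. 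I would state these hypotheses explicitly, noting they are benign in the pre-convergence training regime this proposition targets, after which the decomposition together with the two vanishing-ratio estimates delivers $\lim_{t\to\infty}\cos(\Delta_t,\old{\rvd}_t)= 1$.
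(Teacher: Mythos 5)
Your decomposition is, in substance, the same one the paper uses in Appendix~\ref{app:delay}: there $\Delta_t$ is telescoped as $\sum_{i=1}^{\tau}(\rvd_{t-i}+\bar{\rvh}_{t-i})$ with $\bar{\rvh}_k=-\eta(1-\gamma_k)\bar{\rvg}_k$, and the recursion $\rvd_{k+1}=\gamma_{k+1}(\rvd_k+\bar{\rvh}_k)$ is unrolled to express everything in terms of $\old{\rvd}_t$, yielding \eqref{eq:delay-cor-supp1}. It is the same algebra organized differently: the paper unrolls fully, so the drift you isolate as $R_t$ gets distributed between the $\gamma$-products multiplying $\old{\rvd}_t$ (whose sum tends to $\tau$) and the weighted sum of $(1-\gamma_k)\bar{\rvg}_k$ terms, which is essentially your $G_t$. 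Where you genuinely go beyond the paper is the limiting step, and you have put your finger on a real gap in the paper's own argument: the paper concludes from the \emph{absolute} vanishing of the gradient term that $\cos(\Delta_t,\old{\rvd}_t)\to 1$, but cosine similarity is scale-invariant, so what is needed is smallness of the error terms \emph{relative to} $\|\old{\rvd}_t\|$. Your observation that the claim can fail when $\|\rvd_t\|$ collapses at the same rate as $(1-\gamma_t)$ is correct, and your explicit hypotheses --- bounded gradients plus the non-degeneracy condition $(1-\gamma_t)/\|\rvd_t\|\to 0$ --- are exactly what make the limit statement true; note that neither appears in the proposition as stated (bounded gradients is assumed only in \thmref{thm:conv}), and the paper's proof uses the former implicitly and the latter not at all. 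The tension you flag is also real: in the regime of the convergence proof, $\|\rvd_t\|=O(\frac{1}{t})$ while $1-\gamma_t=\frac{2}{t}$, i.e.\ the same order, so the proposition is only meaningful in the pre-convergence regime where steps are non-negligible, and saying so explicitly is an improvement. One loose end in your sketch: to drive $\|R_t\|/\|\old{\rvd}_t\|\to 0$ you also need $\|\rvd_j\|$ for $j$ in the window to be bounded above (or at least comparable to $\|\rvd_{t-\tau}\|$); this follows from bounded gradients and the recursion for the schedules in question (e.g.\ $\gamma_t=\frac{t-2}{t}$), but it should be stated alongside your other hypotheses.
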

\vspace{-2ex}
\begin{proof}
    By algebraic manipulations, we can write $\Delta_t$ in terms of $\old{\rvd}_t$ as follows:
    \vspace{-2ex}
    \begin{align}\label{eq:delay-cor}
        \Delta_t = \sum_{i=1}^{\tau} \Bigg[&\left(\Pi_{j=t-\tau+1}^{t-i}\gamma_j\right)\old{\rvd}_t\\[-1ex] &- \eta\sum_{k=t-\tau}^{t-i}\left(\Pi_{j=k+1}^{t-i}\gamma_j\right)(1-\gamma_{k})\,\bar{\rvg}_{k}\Bigg]\ ,\nonumber
\end{align}
where $\bar{\rvg}_t = \nabla f(\bar{\rvw}_t + \old{\rvd}_t)$. When, $\gamma_t\to 1$, the gradient term vanishes, and $\cos(\Delta_t, \old{\rvd}_{t})\to 1$. 
Refer to Appendix~\ref{app:delay} for the detailed proof.
\end{proof}
\vspace{-1ex}

Here, since the last update step aligns with the delay direction, one may wonder what if we extrapolate in the direction of $(\rvw_t- \rvw_{t-1})$ to completely compensate for the delay rather than taking a fractional step $\rvd_t = \gamma_t(\rvw_t- \rvw_{t-1})$. The answer is, convergence may be disrupted if one naively extrapolates further than $\gamma_t$. Note that by definition of \gls{NAG}, $\gamma_t$ cannot be larger than 1, and it is usually derived as part of the convergence proof. Nevertheless, our experiments show that our delay correction in the weight space is superior compared to gradient forecasting based correction methods. 

\begin{figure}[t]
    \centering
    %trim={<left> <lower> <right> <upper>}
    \includegraphics[width=0.99\linewidth]{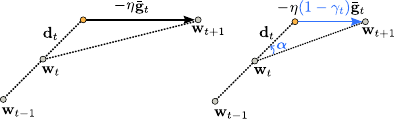}
    \vspace{-1ex}
    \caption{\em Original NAG (left) and our modified version (right) for delayed gradients (denoted with $\bar{\rvg}_t$). Our method discounts the gradient term by $(1-\gamma_t)$. When $\gamma_t\to 1$, the angle $\alpha\to 0$, making the weight trajectory smoother. Consequently, the look-ahead $\rvd_t$ can be shown to act as delay correction, alleviating gradient staleness.
    }
    \vspace{-3ex}
    \label{fig:nag}
\end{figure}

\vspace{-1.5ex}
\paragraph{Convergence analysis.}
We now state our convergence theorem for a convex, $\beta$-smooth function. 
\begin{thm}\label{thm:conv}
    Let $f$ be a convex, $\beta$-smooth function with bounded gradients, then the iterates in~\eqref{eq:nago} with $\eta=\frac{1}{\beta}$ converges at a rate of $O(\frac{1}{t})$.
\end{thm}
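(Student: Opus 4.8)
The plan is to run the potential-function (Lyapunov) argument behind accelerated-gradient convergence, while treating the delayed gradient as a controlled perturbation of the exact look-ahead gradient. The first move is to rewrite \eqref{eq:nago} in an equivalent velocity form: setting $\rvs_t = \rvw_t - \rvw_{t-1}$ and $\lambda_t = 1-\gamma_t$, the update becomes $\rvs_{t+1} = (1-\lambda_t)\rvs_t - \eta\lambda_t\,\bar{\rvg}_t$ together with $\rvw_{t+1} = \rvw_t + \rvs_{t+1}$, which exposes the velocity as an exponential moving average of the (delayed) gradient steps and clarifies that the discount $(1-\gamma_t)$ is exactly the weight needed to keep the gradient coefficient $\eta(1-\gamma_t)$ aligned with the momentum coefficient. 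I would then set up a Lyapunov function of the form $\Phi_t = a_t\big(f(\rvw_t)-f^\star\big) + \frac{1}{2}\|\rvz_t - \rvw^\star\|^2$ for a suitable weight sequence $a_t$ and auxiliary sequence $\rvz_t$ (where $\rvw^\star$ is a minimizer and $f^\star = f(\rvw^\star)$), mirroring the standard proof but with the gradient evaluated at the delayed point $\bar{\rvw}_t + \old{\rvd}_t$ of \eqref{eq:delta}.

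Two preparatory estimates are needed. First, a bounded-displacement bound: since $\|\nabla f\|\le G$ and $0<\gamma_t<1$, the velocity recursion gives $\|\rvs_{t+1}\|\le (1-\lambda_t)\|\rvs_t\| + \eta\lambda_t G$, so by induction $\|\rvw_{t+1}-\rvw_t\|\le M := \max\{\|\rvs_1\|,\eta G\}$ (and $M=\eta G$ when started from a single point, since $\gamma_1=0$). Second, a delay-error bound: writing $\bar{\rvg}_t = \nabla f(\rvw_{t-\tau}+\rvd_{t-\tau})$ and letting $\rvy_t = \rvw_t + \rvd_t$, $\beta$-smoothness gives $\|\bar{\rvg}_t - \nabla f(\rvy_t)\| \le \beta\|\rvy_t - \rvy_{t-\tau}\| \le \beta(\tau+2)M$, where the displacement bound controls $\|\rvy_t-\rvy_{t-\tau}\|$. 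This replaces the delayed gradient by the exact look-ahead gradient plus an error of size $O(\tau)$.

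With these in hand I would derive the one-step inequality for $\Phi_t$, combining the descent lemma from $\beta$-smoothness at $\rvy_t$, convexity in the form $f(\rvy_t)-f^\star \le \langle \nabla f(\rvy_t),\, \rvy_t-\rvw^\star\rangle$, and the expansion of the potential; the discount $(1-\gamma_t)$ makes the momentum/gradient cross terms telescope as in the undelayed case, while the delay enters through the inner product $\langle \bar{\rvg}_t - \nabla f(\rvy_t),\, \cdot\rangle$ and the mismatch in the descent step. Summing over $t=1,\ldots,T$, the telescoped potential contributes the leading $O(1/T)$ term (the rate is $1/T$ rather than $1/T^2$ because the perturbed step no longer supports the full estimate-sequence acceleration), and the delay contributions—each carried by the vanishing factor—remain $O(1/T)$, yielding $f(\rvw_T)-f^\star = O(1/T)$ for the appropriately weighted iterate. \proref{pro:delaycor} supplies the matching intuition that the look-ahead cancels the leading part of the delay.

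The crux, and the step I expect to be the main obstacle, is controlling the \emph{accumulated} delay error. Because the smoothness estimate yields only a constant ($O(\tau)$) gradient error, a naive perturbed-descent argument would converge merely to an $O(\tau)$ neighborhood of the optimum; obtaining exact $O(1/t)$ convergence forces me to exploit the vanishing gradient coefficient $\eta(1-\gamma_t)$ jointly with the telescoping of $\Phi_t$, and to choose the weights $a_t$ and the $\gamma_t$ schedule so that the error terms are summable against the potential gain rather than breaking the telescoping or leaving a residual neighborhood. A secondary subtlety is that \eqref{eq:nago} is not literally \gls{NAG} because of the discount, so the descent inequality must be re-derived from scratch rather than quoted from the standard analysis.
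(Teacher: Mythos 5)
Your overall architecture --- a weighted potential function, telescoping, and the delayed gradient treated as a perturbation --- is the same as the paper's, which follows Bubeck's estimate-sequence proof with weights $\lambda_t = t$ and the schedule $\gamma_t = \frac{t-2}{t}$ forced by the telescoping condition. However, there is a genuine gap at exactly the point you flag as ``the main obstacle,'' and the preparatory estimates you establish cannot close it. Your displacement bound is only a constant, $\|\rvw_{t+1}-\rvw_t\| \le M = \max\{\|\rvs_1\|, \eta G\}$, so your delay-error bound $\|\bar{\rvg}_t - \nabla f(\rvy_t)\| \le \beta(\tau+2)M$ is also a constant. Feeding a constant-size delay into the accelerated potential is not merely ``neighborhood-inducing'' --- it is fatal: the telescoping forces $a_t \sim t$, and the quadratic delay term produced by smoothness (in the paper's notation, $\frac{\lambda_t\beta}{2}\|\old{\Delta}_t\|^2$ with $\old{\Delta}_t = \Delta_t + \rvd_t - \old{\rvd}_t$) then grows linearly in $t$, so the accumulated error is $O(t^2)$ and the final bound is vacuous. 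The mechanism you propose to rescue this --- exploiting the vanishing coefficient $\eta(1-\gamma_t)$ and tuning the weights $a_t$ --- cannot work on its own: the vanishing coefficient only tames the \emph{linear} (inner-product) error term, since $\lambda_t(1-\gamma_t) = 2$ stays bounded, while the quadratic term needs $\|\old{\Delta}_t\|$ itself to decay; no admissible choice of weights absorbs a constant $\|\old{\Delta}_t\|$.

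The missing idea is a schedule-specific strengthening of your first preparatory estimate, and it is the paper's key lemma: with $\gamma_t = 1 - \frac{2}{t}$, the velocity recursion you already wrote, $\rvs_{t+1} = \gamma_t \rvs_t - \eta(1-\gamma_t)\bar{\rvg}_t$, admits the induction $\|\rvs_t\| = O(\frac{1}{t})$ (the forcing term is $\eta(1-\gamma_t)\|\bar{\rvg}_t\| = O(\frac{1}{t})$ by bounded gradients, and the contraction factor $\gamma_t = \frac{t-2}{t}$ preserves the $\frac{1}{t}$ envelope; the paper enforces the base case $t \le \tau$ by a warmup). This upgrades the delay to $\|\old{\Delta}_t\| = O(\frac{\tau}{t})$ for fixed $\tau$, whence each error term is $O(\frac{1}{t})$ and their sum is $O(\log t)$, which is what makes the telescoped bound deliver the stated rate. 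In other words, the decay you need is a property of the \emph{iterates} under the specific $\gamma_t$ schedule, not of the coefficients in the one-step inequality. A secondary, harmless difference: the paper never decomposes $\bar{\rvg}_t$ into $\nabla f(\rvy_t)$ plus an error as you do; it applies smoothness and convexity directly at the delayed point $\bar{\rvw}_t + \old{\rvd}_t$, so your $\beta$-smoothness gradient-comparison step is unnecessary bookkeeping once the $O(\frac{1}{t})$ displacement lemma is in hand.
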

\vspace{-2ex}
\begin{proof}
    We largely follow the proof of~\cite{bubeck2015convex} while adopting delayed gradients. As part of the proof, the momentum coefficient is set to $\gamma_t=\frac{t-2}{t}$ and we show that $\|\Delta_t\|=O(\frac{1}{t})$ using the bounded gradient assumption. Detailed proof can be found in Appendix~\ref{app:conv}.
\end{proof}
\vspace{-1ex}
We do not claim the sublinear rate we derived is tight, and the rate or the constants in the bound may be improved. We leave any such analysis for future work.
To the best of our knowledge, this is the first time a variant of the Nesterov method is shown to converge in the presence of delayed gradients.

\vspace{-1ex}
\paragraph{Implementation details.}\label{sec:impl}
For transformer architectures, AdamW optimizer~\cite{loshchilov2017decoupled} is known to be superior over SGD. To this end, we adopt the NAdam optimizer~\cite{nadam} that incorporates the Nesterov method in Adam~\cite{kingma2014adam} with decoupled weight decay for our language model training. Interestingly, both Adam and NAdam discount the gradient term by $(1-\gamma_t)$, as they treat momentum as an exponential moving average of gradients. Even though the motivation of Adam is different, in this paper, we theoretically and empirically show the effectiveness of this discount factor for asynchronous optimization. 

In the PyTorch implementation of NAdam~\cite{pytorch_nadam}, the momentum coefficient is warmed up to the $\beta_1$ value passed to the algorithm, which satisfies our assumption in~\proref{pro:delaycor} when $\beta_1$ is set close to $1$. To this end, we use the {\em NAdam optimizer as is}, and use a large value for $\beta_1$ ($0.99$ in our experiments). It is remarkable that, with almost no modifications to an existing implementation, we show significant improvement over existing asynchronous optimization methods and provide theoretical justification. In fact, as we show in our experiments, simply changing the optimizer to NAdam improves other delay correction methods such as the learning rate discounting approach~\cite{yang2021pipemare} and gradient forecasting methods~\cite{zheng2017asynchronous}. However, such delay correction methods deteriorate the performance of NAdam as is, validating our insight that correcting delays in the weight space is more effective.

Since we build our method using the PipeDream framework (refer~\secref{sec:pd}), at each stage we store a copy of weights for each active microbatch to ensure correct gradient computation. This amounts to storing $\tau_i$ copies of weights in stage $i$, therefore, the memory requirement of our method grows linearly with the delay (or number of stages). 
Even though, in practice, these weight stashes can be offloaded to the CPU effectively masking the memory requirement, for completeness, we also discuss a {\em no-weight-stash version} of our method below. Despite backpropagating through different sets of weights compared to the forward pass, surprisingly, this method is competitive to the synchronous method {\em without any additional memory requirements}.

\vspace{-1ex}
\subsection{Memory Efficient Version}
As noted before, without weight stashing, the backpropagation is incorrect, \ie,~\eqref{eq:bp} takes the following form:
% \vspace{-3ex}
\begin{align}
    \tilde{\nabla} f_i(\rvw_i^{t-\tau_i}) &= h_i(\rvw_i^t, \tilde{\rve}_i^{t-\tau_{i+1}})\ ,\\\nonumber
    \tilde{\rve}_{i-1}^{t-\tau_i} &= g_i(\rvw_i^{t}, \tilde{\rve}_i^{t-\tau_{i+1}})\ .
\end{align}
Here, since $\rvw_i^{t}$ is used to backpropagate (instead of $\rvw_i^{t-\tau_i}$), the gradients are altered for the following stages. Hence, we denote the delayed error signal by $\tilde{\rve}$ and the weight gradients by $\tilde{\nabla} f_i$ to indicate that they are altered. To compensate for this, we make two modifications: 1) stage-dependent learning rate, and 2) stage-dependent momentum coefficient.

Here, the idea is that earlier stages incur larger delays and also larger error accumulation due to incorrect backpropagation. Even if our variant of \gls{NAG} can alleviate the issue with larger delays, larger error accumulation is still a problem. To this end, we further decrease the learning rate for earlier stages following the idea of~\cite{yang2021pipemare}. Additionally, the momentum coefficient is linearly increased from $0.9$ to $0.99$ from the last stage to the first one. Precisely, the learning rate $\eta$ and the momentum coefficient $\gamma$ for stage $i\in\{1,2,\ldots,P\}$ are set as follows:
\vspace{-0.5ex}
\begin{align}\label{eq:no-ws-cor}
    \eta_i^t &= \frac{\eta}{\tau_i^{\rho_t}}\qquad\mbox{where $\rho_t = 1 - \min\left(\frac{t}{T}, 1\right)$}\ ,\\[-1ex]\nonumber
    \gamma_i &= 0.9 + \frac{P-i}{P} * 0.09\ ,\\[-5ex]\nonumber
\end{align}
where $P$ is the number of stages. Note, the learning rate correction is only applied for the first $T$ iterations to stabilize training as in~\cite{yang2021pipemare}.

\vspace{-1.5ex}
\section{Related Work}

\vspace{-0.5ex}
\paragraph{Asynchronous data parallel methods.}

\gls{DP} is a traditional distributed training setting, where each device optimizes the full model and periodically synchronizes the model parameters. Asynchronous \gls{DP} methods are well-studied within the theoretical framework and many gradient delay correction mechanisms have been developed~\cite{agarwal2011distributed,stich2019error,dp-async-survey}. Notable methods that improve over the simple asynchronous SGD~\cite{recht2011hogwild} include delay dependent learning rate~\cite{barkai2019gap,mishchenko2206asynchronous}, gradient forecasting with second-order information~\cite{zheng2017asynchronous}, and look-ahead in the weight space~\cite{dana}.
Apart from this, training dynamics of asynchronous \gls{DP} methods have also been analyzed~\cite{asynchrony-begets-momentum,liu2024asynchronous} and some of these observations may be useful in the PP setting as well.

\vspace{-1ex}
\paragraph{Pipeline parallel methods.}
The main objective of PP methods~\cite{pp-survey} is to improve pipeline utilization which led to the development of many pipeline scheduling strategies including GPipe~\cite{huang2019gpipe}, \gls{1F1B}~\cite{narayanan2021efficient}, and Zero Bubble~\cite{qi2023zero}. However, these methods suffer from synchronization bottlenecks. Asynchronous methods alleviate this bottleneck to achieve 100\% pipeline utilization at the cost of gradient staleness. 
Notable gradient delay correction mechanisms for PP include weight stashing~\cite{narayanan2019pipedream,pipedream-2bw}, learning rate discounting~\cite{yang2021pipemare}, and direct weight prediction~\cite{spec-train,guan2019xpipe}. Moreover, gradient forecasting methods developed for \gls{DP}~\cite{zheng2017asynchronous} can also be employed.
However, existing asynchronous \gls{PP} methods are mainly empirical and tested mostly on small-scale image classification or language translation tasks.
In contrast, for the first time, we demonstrate that asynchronous methods can surpass synchronous methods in 1B parameter scale language modelling tasks, in addition to providing theoretical convergence guarantees.

\vspace{-1ex}
\section{Experiments}

\begin{figure*}[t]
    \centering
    \begin{subfigure}{0.33\linewidth}
    \includegraphics[width=0.99\linewidth]{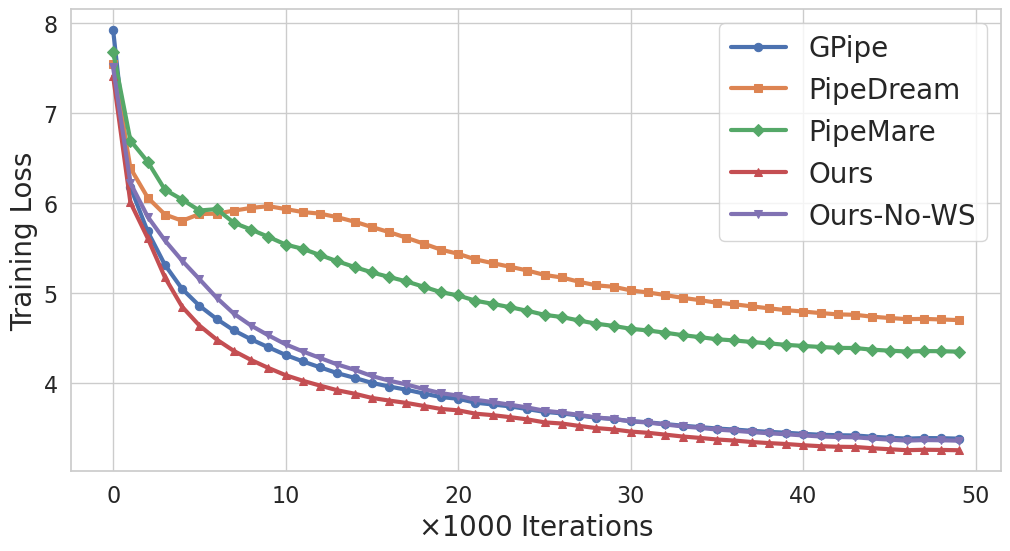}
    \caption{WikiText}
    \end{subfigure}%    
    \begin{subfigure}{0.33\linewidth}
    \includegraphics[width=0.99\linewidth]{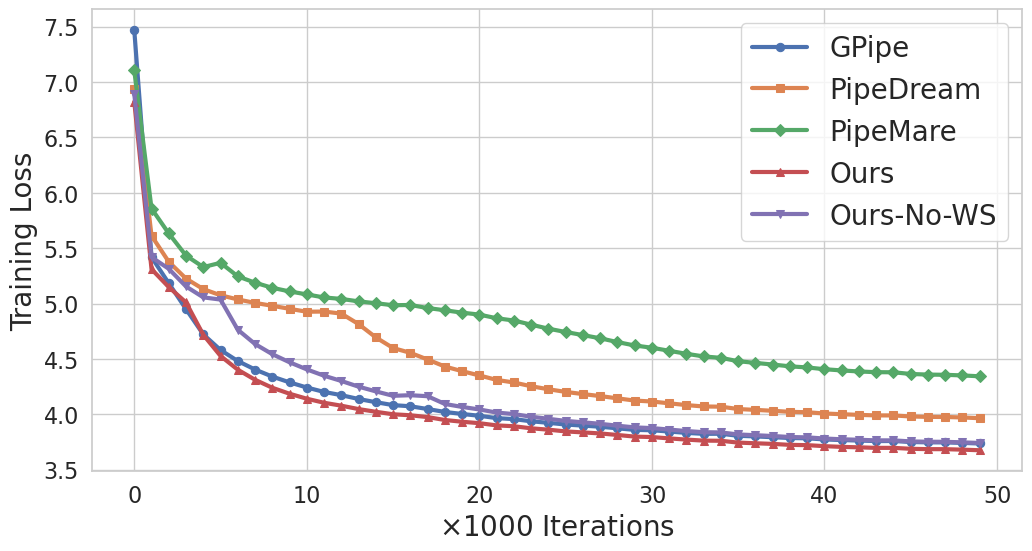}
    \caption{BookCorpus}
    \end{subfigure}% 
    \begin{subfigure}{0.33\linewidth}
    \includegraphics[width=0.99\linewidth]{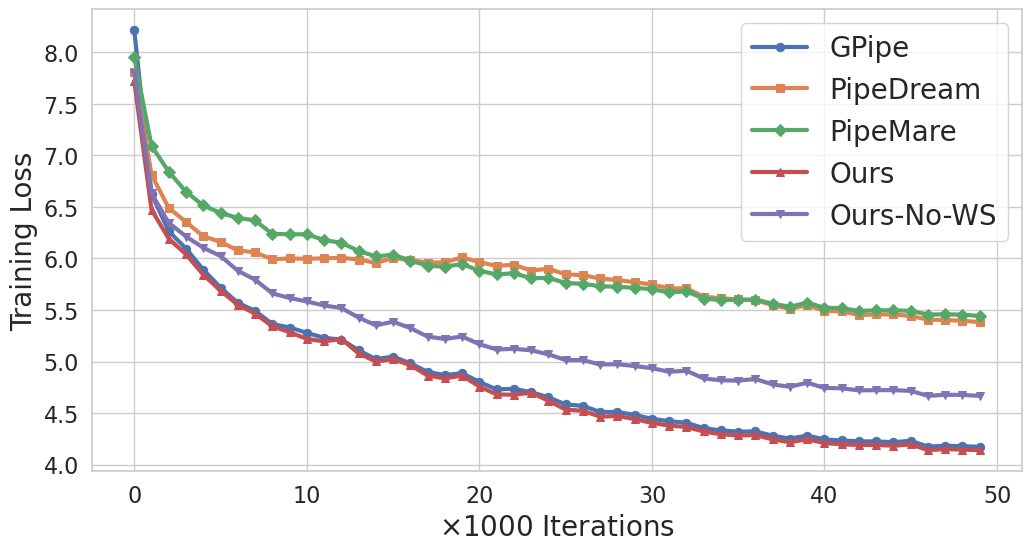}
    \caption{OpenWebText}
    \end{subfigure}
    
    \vspace{-1.5ex}
    \caption{\em Training trajectory comparison on three language modelling datasets. In all scenarios, our method significantly outperforms the asynchronous methods while surpassing the synchronous GPipe method throughout training. Our memory efficient version clearly outperforms the asynchronous methods while being competitive to GPipe in two out of three datasets.
    }
    \vspace{-2ex}
    \label{fig:main}
\end{figure*}

\vspace{-1ex}
\subsection{Experimental Setup} 
We evaluate our method on the language modelling task using decoder-only architectures. We use three large-scale datasets: \acrfull{WT}~\cite{wikitext}, \acrfull{BC}~\cite{bookcorpus}, and \acrfull{OWT}~\cite{owt} datasets. For \wiki{}, we utilize the predefined training and validation splits, for the other datasets, we randomly select $10\%$ of the training set as the held-out validation set. Our model architecture is based on NanoGPT~\cite{Karpathy2022} with no dropout. The base configuration includes a context length of 512, an embedding dimension of 768, 12 attention heads, and 8 layers, with approximately 134M parameters, and each layer is treated as a stage in our \gls{PP} framework. %This base model contains approximately 134M parameters. 
We use the GP2 tokenizer~\cite{radford2019language} and train the model from scratch. Across all experiments, we maintain a microbatch size of 8, a learning rate $\eta$ of $3e\text{-}4$, and a weight decay of $0.01$, unless otherwise specified. The update interval ${K=1}$ for the asynchronous methods. The learning rate is obtained by tuning the performance of GPipe on the \wiki{} dataset. All baseline methods employ the AdamW optimizer~\cite{loshchilov2017decoupled}. Each experiment is run for 50k iterations, with a linear warmup of 3k iterations starting from a learning rate of $1e\text{-}7$. Then, it is decayed to $3e\text{-}5$ following a cosine decay schedule. 

We evaluate two existing asynchronous \gls{PP} methods, PipeDream~\cite{narayanan2019pipedream} and PipeMare~\cite{yang2021pipemare}, together with the synchronous GPipe~\cite{huang2019gpipe} method. Unlike PipeDream, which stashes old weights, PipeMare estimates these weights using the velocity of weight updates. Moreover, PipeMare incorporates a learning rate discounting mechanism as described in~\eqref{eq:no-ws-cor}, with $T$ set to 6k iterations. We implement PipeMare within the PipeDream framework. For GPipe, we use the \smalltt{torch.distributed.pipelining} package, setting the number of microbatches to $4$ which is limited by the GPU memory. Due to differences in the underlying implementations, the absolute wall-clock time is not comparable between methods. We instead compare performance based on training iterations.

Our proposed method is denoted as \textbf{Ours}, which employs the Nadam optimizer~\cite{nadam} with decoupled weight decay and a momentum coefficient $\beta_1$ of $0.99$. The memory-efficient variant of our method, \textbf{Ours-No-WS}, incorporates the same learning rate discounting as PipeMare (see~\eqref{eq:no-ws-cor}), with $T$ also set to 6k iterations. Unless otherwise specified, all experiments use the base architecture described above and are performed on the \wiki{} dataset with the aforementioned hyperparameters. All experiments are performed on a system equipped with 8 A10G GPUs. %Our code will be made publicly available upon publication to ensure reproducibility.

%%%% summary table %%%%
\begin{table}
    \centering
    \small    
    % \begin{tabular}{@{\hspace{.3\tabcolsep}}l@{\hspace{.5\tabcolsep}}|@{\hspace{.5\tabcolsep}}c@{\hspace{.5\tabcolsep}}c@{\hspace{.5\tabcolsep}}@{\hspace{.5\tabcolsep}}c@{\hspace{.5\tabcolsep}}|@{\hspace{.5\tabcolsep}}c@{\hspace{.3\tabcolsep}}}
    \begin{tabular}{l|ccc|c}
        \toprule
        Method   & \wikis{} & \books{} & \owts{} & Memory\\
        \midrule
        GPipe & 30.63 & 42.39 & 65.17& $O(N)$\\
        \midrule
        PipeDream & 99.48& 52.98& 224.30& $O(PN)$\\
        PipeMare & 71.38& 76.93 & 239.13 &$O(N)$\\
        
        \midrule
        Ours & \textbf{27.72}& \textbf{39.85}& \textbf{62.86}&$O(PN)$\\
        Ours-No-WS & 29.90& 42.61& 108.20&$O(N)$\\
    
        \bottomrule
    \end{tabular}
    \vspace{-1ex}
    \caption{\em Perplexity scores on the validation set at 50k iterations. The memory requirement of each method is shown in the last column, where $N$ is the number of parameters and $P$ is the number of pipeline stages. Ours outperforms all methods, including the synchronous GPipe method. Our memory efficient version (Ours-No-WS) is on par with GPipe on perplexity, while outperforming other asynchronous methods.
    }
    \label{tab:main}
    \vspace{-3ex}
\end{table}

\vspace{-2ex}
\subsection{Main Results}
\vspace{-0.5ex}
We analyze the training trajectories of our method against the baselines for the base architecture with 8 stages, as illustrated in~\figref{fig:main}. The results demonstrate that our method significantly outperforms existing asynchronous approaches and even surpasses GPipe across all three datasets. Our memory-efficient variant is competitive with GPipe, matching its performance on two out of the three datasets. Notably, the training trajectories of PipeDream and PipeMare reveal the optimization challenges inherent in asynchronous setups,\footnote{To the best of our knowledge, this is the first time PipeDream and PipeMare are tested on large-scale language modelling tasks.} while our Nesterov-based delay correction effectively bridges the gap between asynchronous and synchronous methods. Unlike PipeMare, which estimates old weights to facilitate correct backpropagation, our no-weight-stash version achieves superior performance without such estimation, relying instead on the Nesterov-based delay correction.

Although we primarily report training loss, the trends are consistent with validation loss as well. Validation loss comparison is provided in \figref{fig:mainval} in the appendix. For completeness, we include the perplexity scores on the validation set for all methods at 50k iterations in~\tabref{tab:main} along with the memory requirement. In this, we only consider the memory requirement for storing weights and do not consider activation memory as it is the same for all the methods.

\begin{figure}[t]
    \centering
    \begin{subfigure}{0.5\linewidth}
    \includegraphics[width=0.99\linewidth]{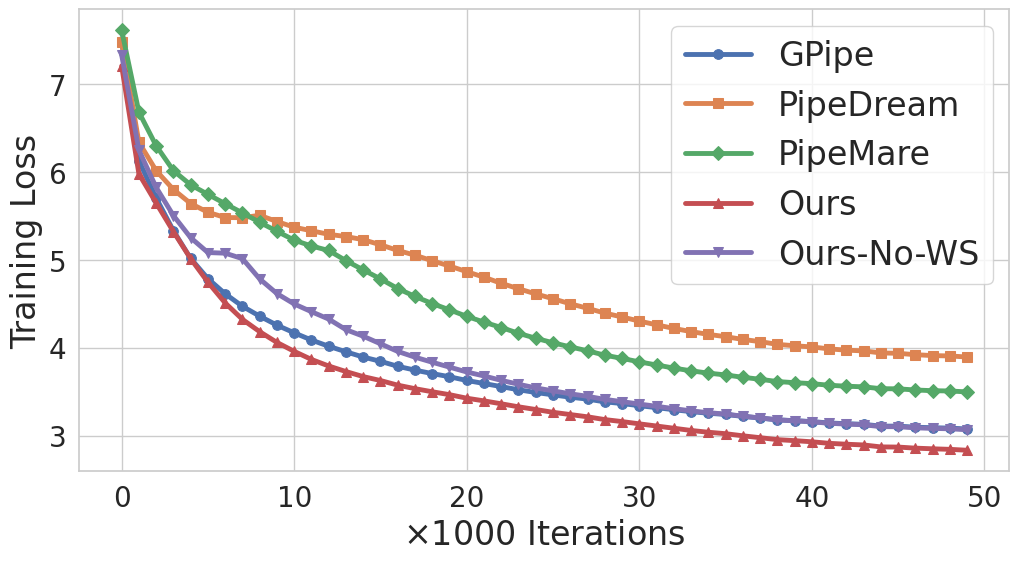}
    % \caption{WikiText}
    \end{subfigure}%    
    \begin{subfigure}{0.5\linewidth}
    \includegraphics[width=0.99\linewidth]{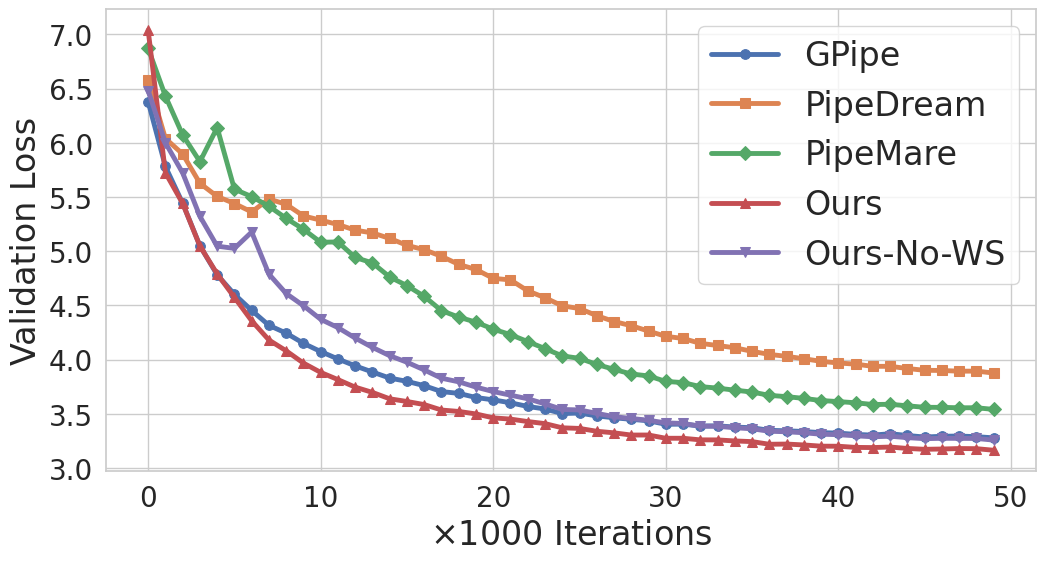}
    % \caption{BookCorpus}
    \end{subfigure}
    
    \vspace{-1.5ex}
    \caption{\em Training and validation trajectory for the 1B parameter model. Similar to the base model, our method outperforms GPipe while the memory efficient version is competitive with GPipe.
    }
    \vspace{-3ex}
    \label{fig:1b}
\end{figure}

\vspace{-1ex}
\subsection{Increasing the Model Size}
To demonstrate the scalability of our approach, we train a~\textbf{1B parameter model} in the asynchronous setting. We maintain the number of stages at 8 but increase the context length to 1024 and the embedding dimension to 2688, with 24 attention heads. A learning rate of $1e\text{-}4$ is used for all methods. These experiments are performed on a system equipped with 8 A100 GPUs. 

As illustrated in~\figref{fig:1b}, the results align with those of the base model. Specifically, our approach significantly outperforms all baselines, including GPipe, throughout training. Additionally, our no-weight-stash variant achieves performance on par with GPipe. Note that aside from adjusting the learning rate, no modifications were made to the method or its hyperparameters. This large-scale experiment clearly demonstrates the merits of our method and the feasibility of asynchronous \gls{PP} optimization for language model training.

\begin{figure*}[t]
    \centering
    \begin{subfigure}{0.33\linewidth}
    \includegraphics[width=0.99\linewidth]{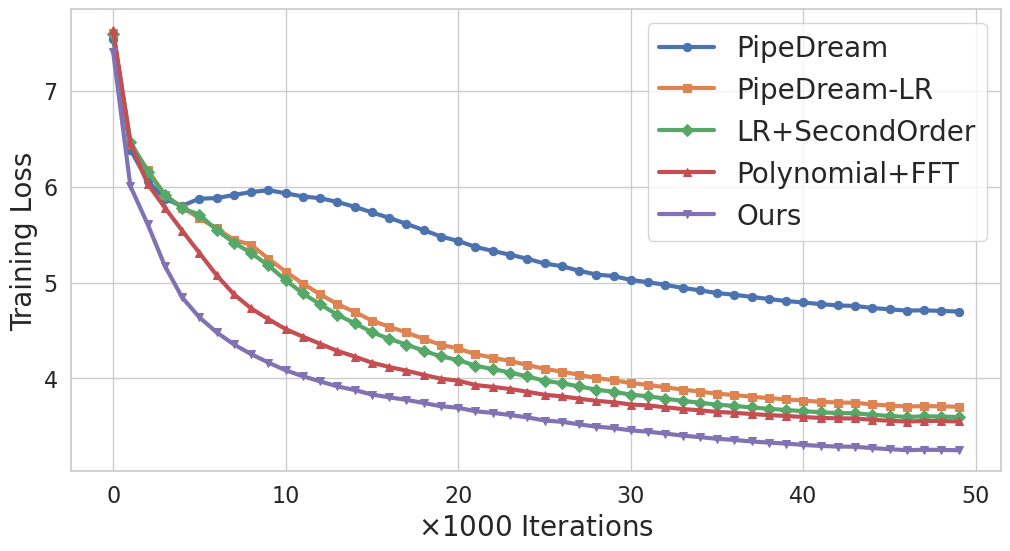}
    \caption{Training loss}
    \end{subfigure}%    
    \begin{subfigure}{0.33\linewidth}
    \includegraphics[width=0.99\linewidth]{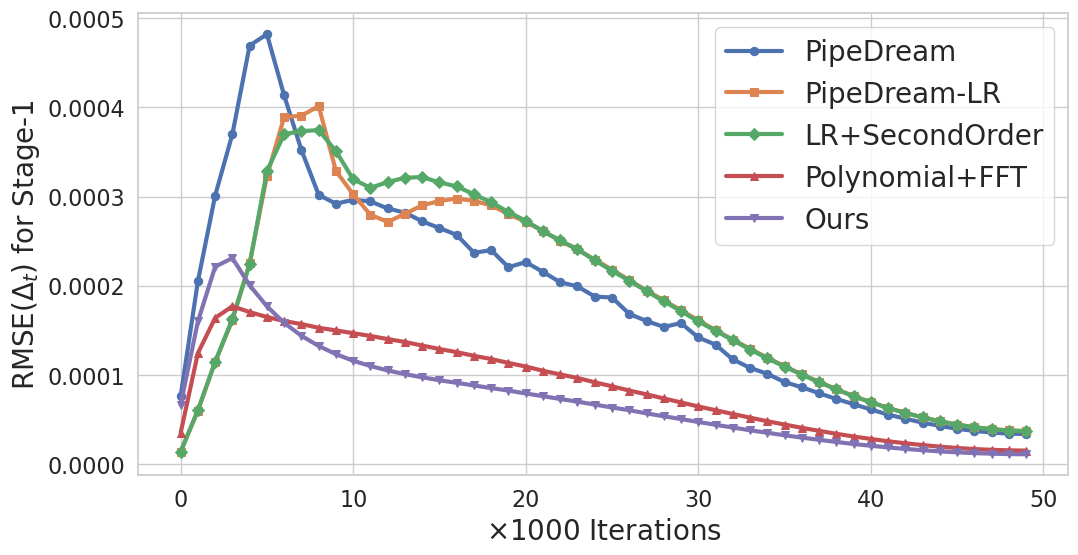}
    \caption{Weight discrepancy for Stage-1}
    \end{subfigure}%
    \begin{subfigure}{0.33\linewidth}
    \includegraphics[width=0.99\linewidth]{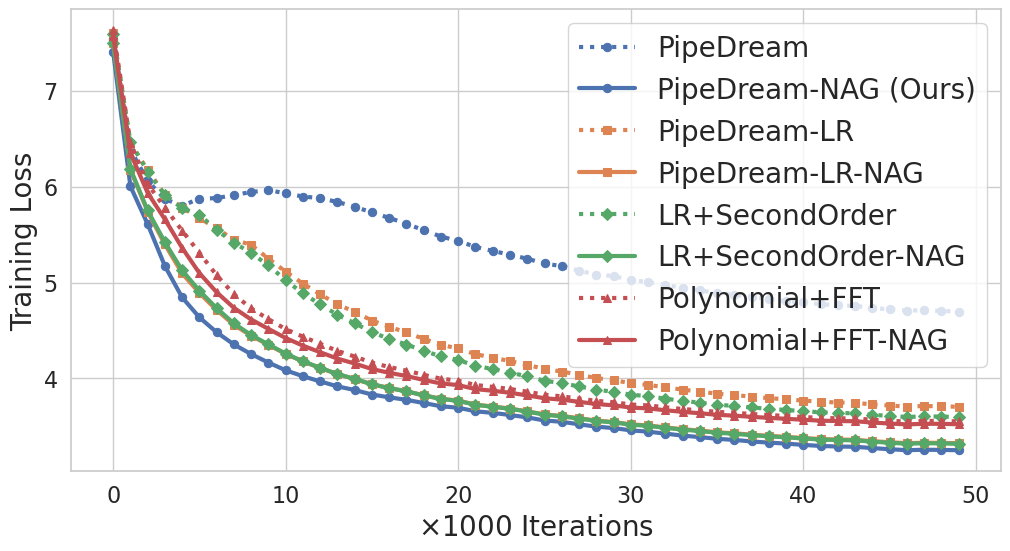}
    \caption{NAG with other delay corrections}
    \end{subfigure}%
    
    \vspace{-1.5ex}
    \caption{\em Comparison with other delay correction methods on \wiki{}. Our method, outperforms all other delay correction methods in terms of training loss and weight discrepancy. Additionally, NAG improves all previous delay correction methods, while NAG alone yields the best performance.
    }
    \vspace{-3ex}
    \label{fig:cor}
\end{figure*}

\vspace{-1ex}
\subsection{Other Delay Correction Methods}
\vspace{-0.5ex}
We compare our method with the following delay correction mechanisms which were originally developed for asynchronous optimization in the \gls{DP} setting.

\vspace{-2ex}
\paragraph{PipeDream-LR.} The learning rate discounting method~\cite{yang2021pipemare,mishchenko2206asynchronous} which employs a delay dependent discounting of the learning rate as noted in \eqref{eq:no-ws-cor} where $T$ is set to 6k.

\vspace{-2ex}
\paragraph{LR-SecondOrder.} On top of the learning rate correction above, we forecast the gradients to the current step using the second-order Taylor expansion of the loss as in~\cite{zheng2017asynchronous}. The implementation is similar to~\cite{zheng2017asynchronous} where the diagonal of the Fisher matrix is used to approximate the Hessian. %This is shown to slightly improve the performance of asynchronous methods in the data parallel setting. 

\vspace{-2ex}
\paragraph{Polynomial+FFT.} In this approach, we frame gradient forecasting as a time series prediction problem, leveraging historical gradient data to predict future gradients. Specifically, we employ a second-order polynomial to model trends and utilize \gls{FFT} to capture any periodic signals. The history size is set to 8. This is a well-known method in time series forecasting literature~\cite{bloomfield2004fourier}, which we adopt for gradient delay correction.

In addition to monitoring training loss, we also evaluate the \gls{RMSE} of the weight discrepancy $\Delta_t$ (as in~\eqref{eq:delta}) at the first stage, which experiences the largest delay. This metric is named gap in~\cite{dana}, which directly measures the effectiveness of delay correction, where smaller gap indicates better delay correction. The results are presented in~\figref{fig:cor}.

Among the aforementioned delay correction methods, polynomial fitting is the most effective. Nevertheless, our simple Nesterov-based approach outperforms all other sophisticated delay correction techniques in terms of training loss and weight discrepancy. Notably, as our method is complementary to the above delay correction strategies, it enhances their performance. Although combining Nesterov with other delay correction mechanisms diminishes its effectiveness, supporting our hypothesis that correcting delays in the weight space is more impactful than other approaches.

\vspace{-1ex}
\subsection{Increasing the Number of Stages}
To evaluate scalability with respect to the number of stages, we increase the number of layers in the base model whilst maintaining the same embedding dimension. We compare the results against GPipe in terms of training loss and the percentage increase in training time. For configurations with 20 and 24 stages, the learning rate is reduced to $1e\text{-}4$ for our method, while all other hyperparameters remain unchanged. The results are presented in~\figref{fig:stages}.

Since the delay grows with the number of stages, the performance of our method degrades slightly as expected. However, due to 100\% pipeline utilization, the percentage increase in runtime for our approach is significantly lower compared to GPipe. Concretely, for GPipe, 24-stage model takes $8.5\times$ more time compared to the $4$-stage model, however, our model is only $2.5\times$ slower.  
This highlights the trade-off between performance and runtime efficiency, underscoring the advantages of asynchronous optimization. This runtime discrepancy is more pronounced when a faster GPU is used, such as A100, as observed in our 1B model experiments (refer to \figref{fig:1btime} in the appendix).

\begin{figure}[t]
    \centering
    \begin{subfigure}{0.5\linewidth}
    \includegraphics[width=0.99\linewidth]{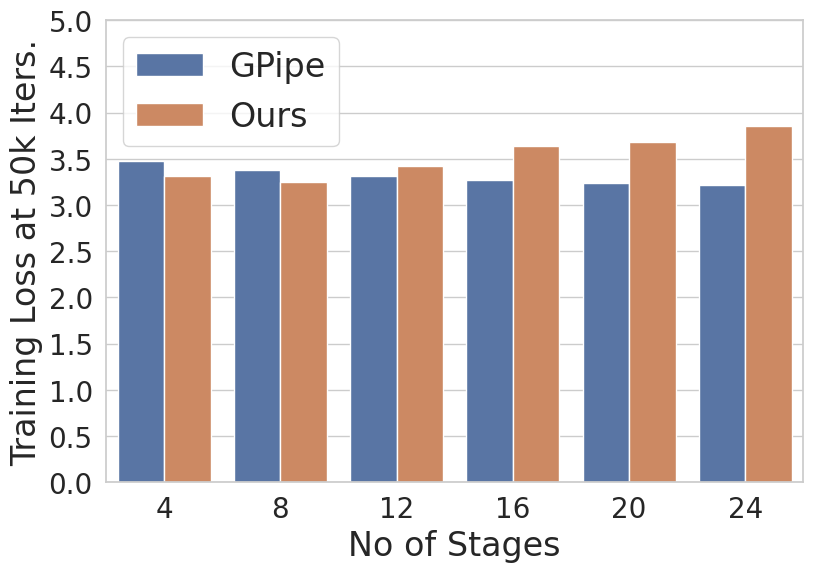}
    % \caption{WikiText}
    \end{subfigure}%    
    \begin{subfigure}{0.5\linewidth}
    \includegraphics[width=0.99\linewidth]{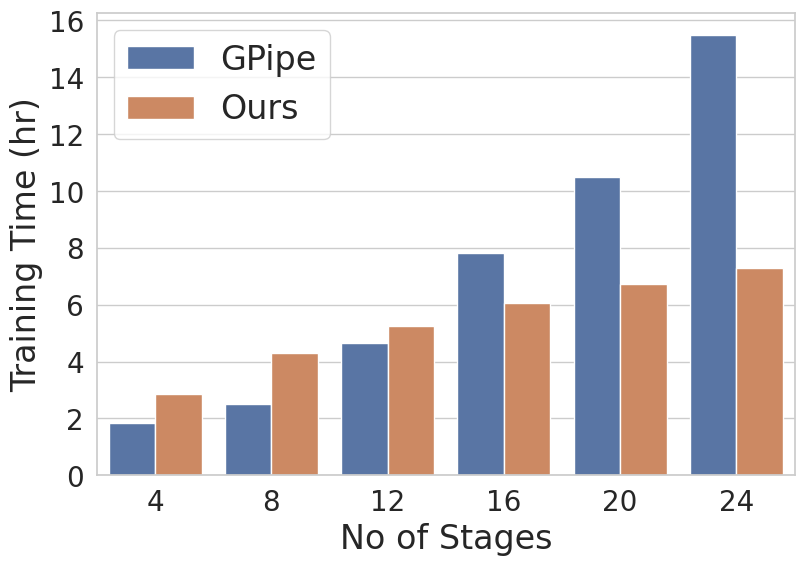}
    % \caption{BookCorpus}
    \end{subfigure}
    
    \vspace{-1.5ex}
    \caption{\em Performance with respect to the number of stages. Even though, performance slightly degrades for our method compared to GPipe, the training time increase is exponentially larger for GPipe.
    }
    \vspace{-3ex}
    \label{fig:stages}
\end{figure}

\begin{figure*}[t]
    \centering
    \begin{subfigure}{0.33\linewidth}
    \includegraphics[width=0.99\linewidth]{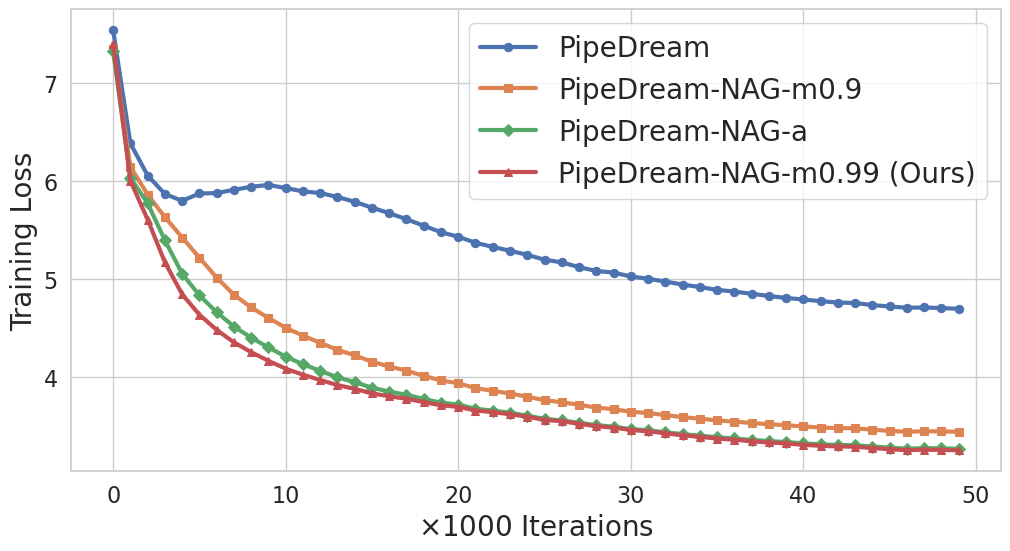}
    \caption{Ablation of our main method}
    \end{subfigure}%    
    \begin{subfigure}{0.33\linewidth}
    \includegraphics[width=0.99\linewidth]{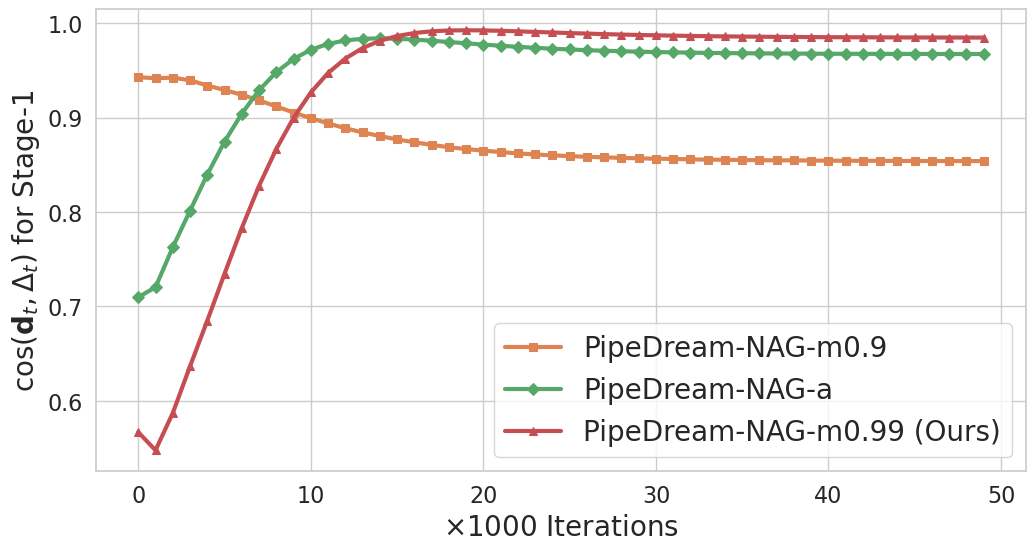}
    \caption{Look-ahead and delay alignment}
    \end{subfigure}%
    \begin{subfigure}{0.33\linewidth}
    \includegraphics[width=0.99\linewidth]{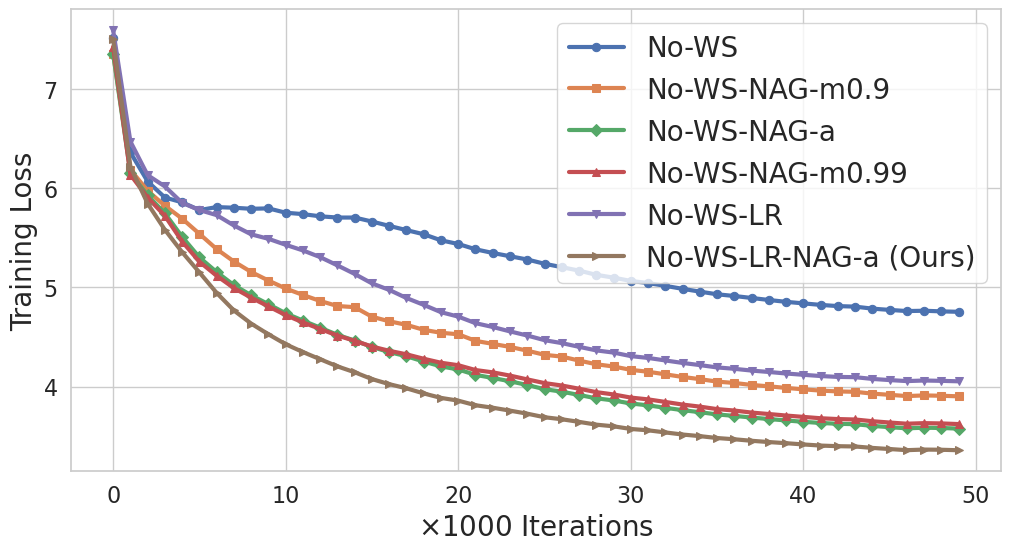}
    \caption{Ablation of Ours-No-WS}
    \end{subfigure}
    
    \vspace{-1.5ex}
    \caption{\em Ablation study of our methods. Constant momentum coefficient of 0.99 performs slightly better than the adaptive version (denoted with `-a') and it also shows the best alignment in (b). For the memory efficient version, in addition to adaptive momentum, delay dependent learning rate discounting also helps as shown in (c).
    }
    \vspace{-3ex}
    \label{fig:abl}
\end{figure*}

\begin{figure}[t]
    \centering
    \begin{subfigure}{0.5\linewidth}
    \includegraphics[width=0.99\linewidth]{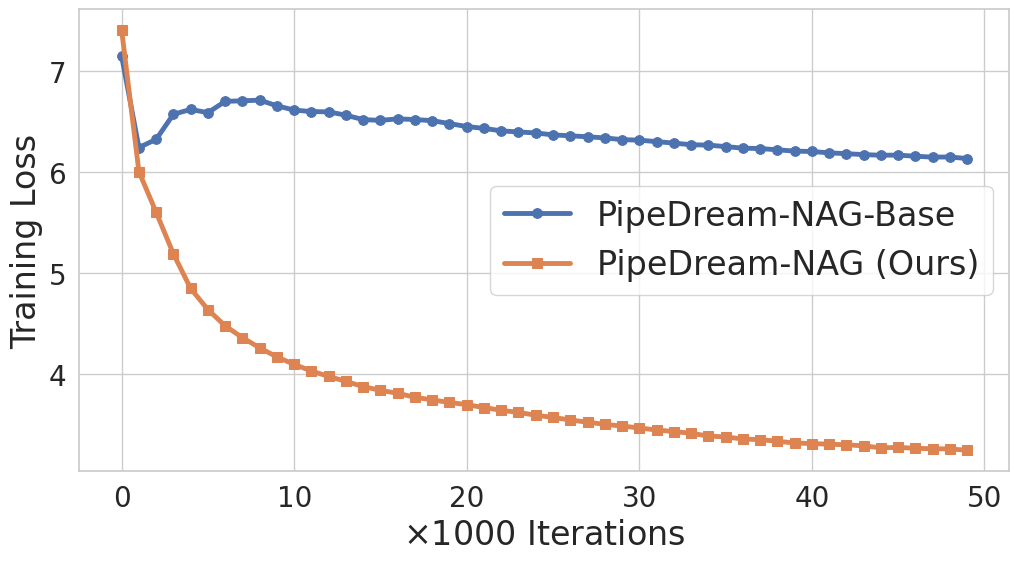}
    % \caption{WikiText}
    \end{subfigure}%    
    \begin{subfigure}{0.5\linewidth}
    \includegraphics[width=0.99\linewidth]{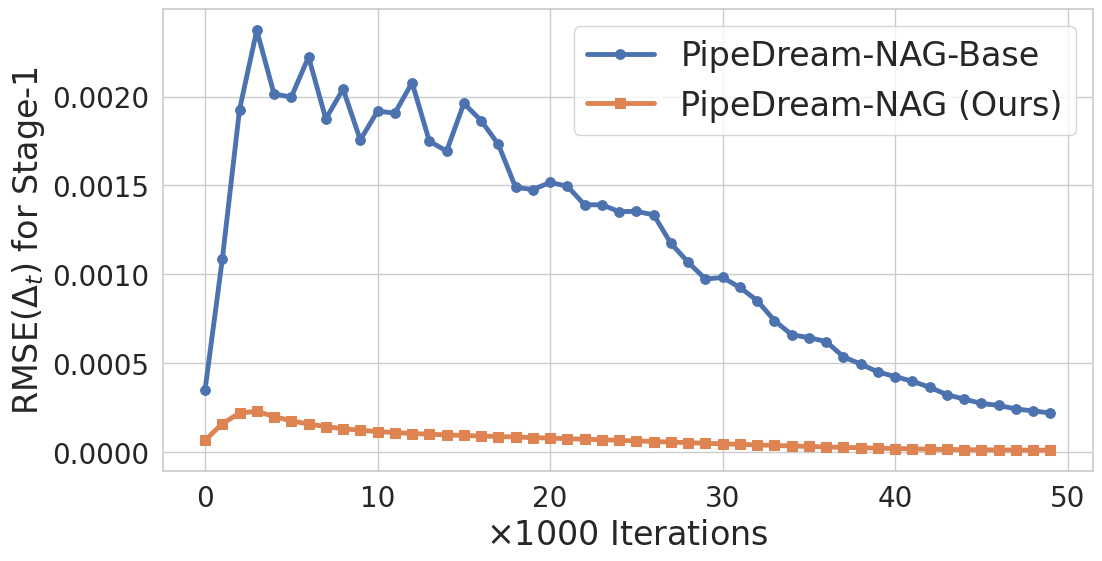}
    % \caption{BookCorpus}
    \end{subfigure}
    
    \vspace{-1.5ex}
    \caption{\em Our approach with and without the gradient discounting term for the Nesterov method. Without the discounting term, training is significantly disrupted due to gradient staleness, validating our insight.
    }
    \vspace{-3ex}
    \label{fig:ndbase}
\end{figure}

\vspace{-1ex}
\subsection{Ablation Study}
To understand the effect of momentum coefficient, we vary the momentum coefficient and report the performance of our base model on the \wiki{} dataset. In addition to training loss, we also measure the cosine similarity between the look-ahead step $\rvd_t$ and the weight difference $\Delta_t = \rvw_t - \rvw_{t-\tau}$ at the first stage, where the discrepancy is most pronounced. The results are presented in~\figref{fig:abl}.

As expected, increasing the momentum coefficient from 0.9 to 0.99 leads to improved performance. However, using an adaptive momentum coefficient, as defined in~\eqref{eq:no-ws-cor}, results in slightly worse performance compared to a fixed value of 0.99 for our main method. The influence of the momentum coefficient on the alignment between the look-ahead step and the weight difference is also empirically demonstrated, matching our theoretical insight. On the other hand, for the memory-efficient version, a delay-adaptive momentum coefficient performs better, and incorporating learning rate discounting further enhances performance.

\vspace{-3ex}
\paragraph{Effect of gradient discounting.}
As outlined in~\secref{sec:impl}, the standard implementation of NAdam incorporates a discount factor of $(1-\gamma_t)$ for the gradient term. To demonstrate the necessity of this discounting factor for our approach, we train a model using a modified optimizer where this term is removed, denoted as PipeDream-NAG-Base. The results are presented in~\figref{fig:ndbase}.

This modified approach struggles due to gradient staleness and fails to achieve comparable performance with our method. Notably, the weight discrepancy at the first stage is an order of magnitude larger than that observed with our gradient discounting approach. Empirically, this strongly validates the significance of the discounting factor.

\begin{figure}[t]
    \centering
    \includegraphics[width=0.99\linewidth]{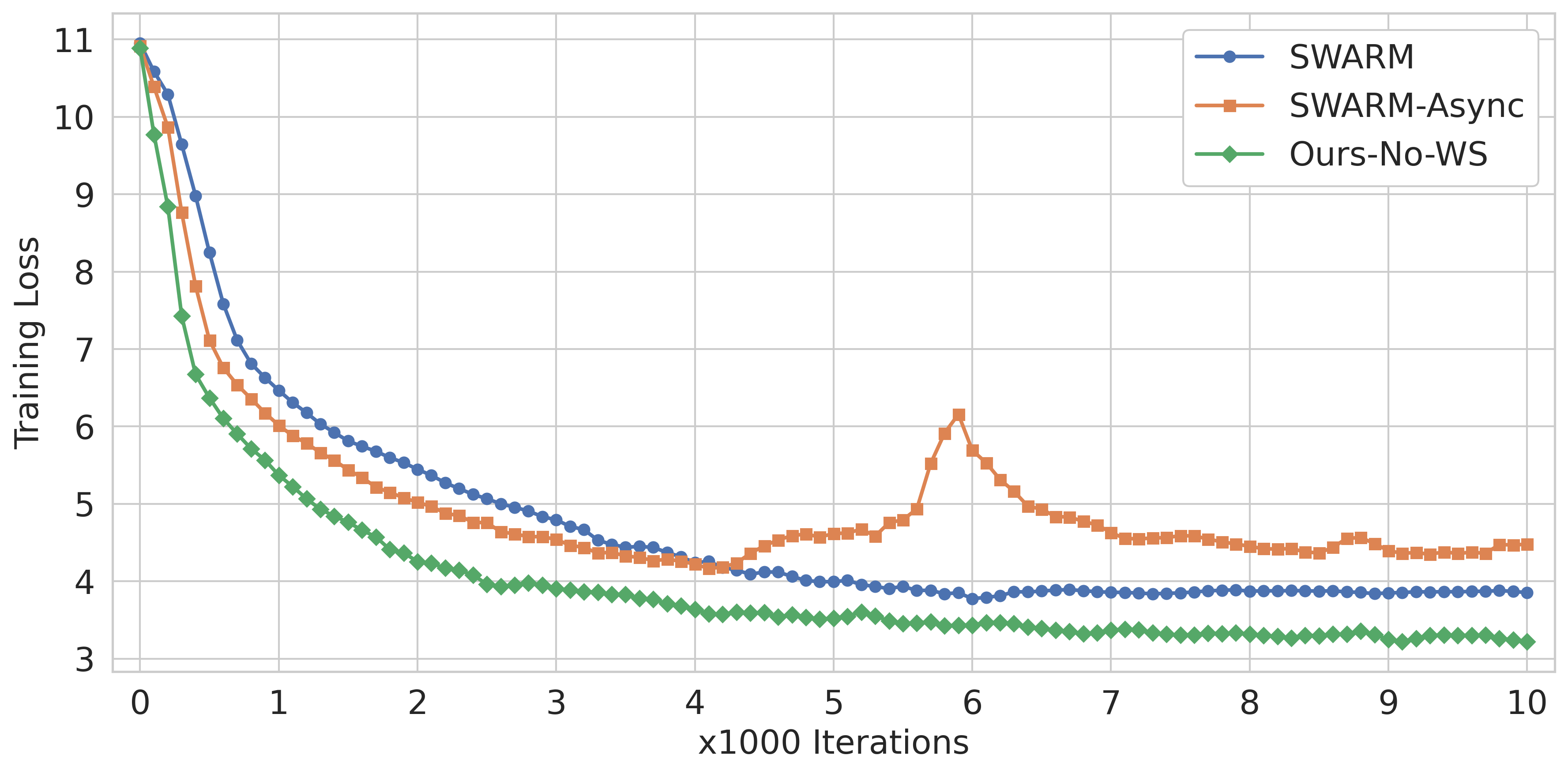}
    \vspace{-1.5ex}
    \caption{\em Training trajectory comparison in \swarm{}. Our method significantly outperforms both the synchronous and asynchronous versions of \swarm{}.
    }
    \vspace{-3.5ex}
    \label{fig:swarm_train}
\end{figure}

\vspace{-1ex}
\subsection{Realistic Decentralized Training}

Finally, to stress test our method, we evaluate our approach in a realistic decentralized setting using \swarm{}~\cite{ryabinin2023swarm}. \swarm{} is built using the Hivemind framework~\cite{ryabinin2020crowdsourced,hivemind} and it supports fault-tolerant, pipeline-parallel training with multiple worker nodes per stage (\ie, \gls{DP} at each stage) connected over the internet. Natively, it employs gradient accumulation akin to synchronous training and the workers at each stage are synchronized periodically.
We refer the interested reader to~\citet{ryabinin2023swarm} for more details. 

Our architecture and the hyperparameters are similar to the base model experiments but adapted to accommodate \swarm{} and we train on the \wiki{} dataset. More details can be found in Appendix~\ref{app:swarm_settings}.
We compare three variants: 1) the standard, synchronous setting (\swarm{}); 2) an asynchronous setting with local updates for every microbatch and a periodic stage-wise weight synchronization (\swarm{}-Async); and 3) \swarm{}-Async with our {\em no-weight-stash} version (Ours-No-WS). Note that weight stashing is not applicable in \swarm{}. Results are reported in \figref{fig:swarm_train}.

Our method clearly outperforms both the synchronous and asynchronous versions of \swarm{} throughout training. Notably, \swarm{}-Async shows training instability even with a lower learning rate (refer to Appendix~\ref{app:swarm_settings}), whereas our method shows stable convergence. Validation loss also follows a similar trend as shown in \figref{fig:swarm_val} in the appendix.

\section{Conclusion}
We introduce a novel variant of \gls{NAG} to alleviate gradient staleness in asynchronous \gls{PP} optimization. Theoretically, we show that our algorithm converges at a sublinear rate for convex, smooth functions in the non-stochastic setting with fixed delay in gradients. Practically, adopting our method is as simple as switching the optimizer and changing the value of an existing hyperparameter. To show the merits of our approach, we performed large-scale experiments on language modelling tasks using  models up to 1B parameters. In all our experiments, our method consistently outperformed previous asynchronous methods as well as the synchronous GPipe method. The behaviour is consistent even in decentralized experiments in \swarm{}. In the future, we intend to investigate the tightness of our convergence rate and extend our approach to \gls{PP} with stage-wise \gls{DP} setting, in a heterogenous decentralized training environment.

% Bibliography components
\bibliographystyle{plainnat}
\bibliography{main}

% Appendix
\newpage
\appendix
\onecolumn
\section{Theoretical Analysis}
We first restate our NAG updates, and then turn to the proofs.

\subsection{Nesterov Method for Delayed Gradients}
Our variant of \gls{NAG} performs the following iterations starting from an initial point $\rvw_1$:
\begin{align}\label{eq:nago-supp}
    \rvd_t &= \gamma_t(\rvw_t - \rvw_{t-1})\ ,\\\nonumber
    \rvw_{t+1} &= \rvw_t + \rvd_t -\eta(1-\gamma_t) \nabla f(\bar{\rvw}_t + \old{\rvd}_t)\ .
\end{align}
Here, $\gamma_1 = 0$ and $\bar{\rvw}_t$ and $\old{\rvd}_t$ denote the delayed versions of weights and look-ahead, respectively, \ie,
\begin{align}
    \bar{\rvw}_t &= \rvw_{t-\tau} = \rvw_t - \Delta_t\ ,\\\nonumber
    \old{\rvd}_t &= \rvd_{t-\tau}\ ,
\end{align}
where $\tau\ge 0$ is the delay, and for simplicity we assume it to be fixed. The main difference to the widely used \gls{NAG} version is the $(1-\gamma_t)$ term for the gradients.

Let us introduce some notations that might be helpful later:
\begin{align}\label{eq:notations}
    \bar{\rvg}_t &= \nabla f(\bar{\rvw}_t + \old{\rvd}_t)\ ,\\\nonumber
    \bar{\rvh}_t &= -\eta(1-\gamma_t) \bar{\rvg}_t\ ,\\\nonumber
    \old{\Delta}_t &= \rvw_t + \rvd_t - \left(\bar{\rvw}_t + \old{\rvd}_t\right) = \Delta_t + \rvd_t - \old{\rvd}_t\ .
\end{align}
$\rvg_t$ and $\rvh_t$ are analogously defined. From above, we may note the following identities:
\begin{align}
    \rvw_{t+1} &= \rvw_t + \rvd_t + \bar{\rvh}_t\ ,\\\nonumber
    \rvd_{t+1} &= \gamma_{t+1} (\rvd_t + \bar{\rvh}_t)\ ,
\end{align}

We are now ready to prove that the look-ahead step acts as delay correction when the momentum coefficient $\gamma_t$ is chosen appropriately. Then, we prove the convergence rate.

\subsection{Look-ahead as Delay Correction}\label{app:delay}

\begin{pro}\label{pro:delaycor1}
    Let $\gamma_t$ be an increasing sequence such that $\lim_{t\to \infty}\gamma_t = 1$ then, $\lim_{t\to\infty}\cos(\Delta_t, \old{\rvd}_t)=1$, where $\cos(\cdot, \cdot)$ is the cosine similarity.
\end{pro}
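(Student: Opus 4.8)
The plan is to first derive the closed-form expression for $\Delta_t$ given in \eqref{eq:delay-cor} and then pass to the limit $\gamma_t\to1$. I begin by telescoping the weight delay, $\Delta_t=\rvw_t-\rvw_{t-\tau}=\sum_{i=1}^{\tau}(\rvw_{t-i+1}-\rvw_{t-i})$, and substituting the single-step identity $\rvw_{s+1}-\rvw_s=\rvd_s+\bar{\rvh}_s$ read off from the update rule in \eqref{eq:nago-supp}. This gives $\Delta_t=\sum_{i=1}^{\tau}(\rvd_{t-i}+\bar{\rvh}_{t-i})$, expressing the delay purely through look-ahead and gradient-correction terms.

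Next I solve the linear recurrence $\rvd_{s+1}=\gamma_{s+1}(\rvd_s+\bar{\rvh}_s)$ to write each $\rvd_{t-i}$ in terms of the base term $\rvd_{t-\tau}=\old{\rvd}_t$. Unrolling produces a homogeneous part $\bigl(\prod_{j=t-\tau+1}^{t-i}\gamma_j\bigr)\old{\rvd}_t$ together with a convolution of the $\bar{\rvh}_k$ against partial products of the $\gamma_j$. The trailing term $\bar{\rvh}_{t-i}$ folds into the gradient sum as its empty-product ($k=t-i$) summand, and substituting $\bar{\rvh}_k=-\eta(1-\gamma_k)\bar{\rvg}_k$ reproduces \eqref{eq:delay-cor} exactly. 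I then group this as $\Delta_t=c_t\,\old{\rvd}_t+\rve_t$, where $c_t=\sum_{i=1}^{\tau}\prod_{j=t-\tau+1}^{t-i}\gamma_j$ is a scalar and $\rve_t$ collects all gradient terms, each carrying a factor $(1-\gamma_k)$.

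Because $\tau$ is fixed and each product of the $\gamma_j\in(0,1)$ is at most $1$, as $\gamma_t\to1$ every factor tends to $1$, so $c_t\to\tau>0$; meanwhile every summand of $\rve_t$ carries a vanishing $(1-\gamma_k)$, so with bounded gradients and products bounded by $1$ its magnitude is $O(\max_k(1-\gamma_k))$. Using scale-invariance of cosine, $\cos(\Delta_t,\old{\rvd}_t)=\frac{c_t\|\old{\rvd}_t\|^2+\langle\rve_t,\old{\rvd}_t\rangle}{\|c_t\old{\rvd}_t+\rve_t\|\,\|\old{\rvd}_t\|}$, so it suffices to show $\|\rve_t\|/(c_t\|\old{\rvd}_t\|)\to0$, and since $c_t\to\tau$ this reduces to $\|\rve_t\|/\|\old{\rvd}_t\|\to0$.

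The main obstacle is that $\old{\rvd}_t$ itself shrinks as $\gamma\to1$, so I cannot simply declare the numerator negligible; I must compare the rates of decay of $\rve_t$ and $\old{\rvd}_t$. The resolution comes from the same recursion $\rvd_{s+1}=\gamma_{s+1}(\rvd_s+\bar{\rvh}_s)$: since momentum acts as a (scaled) exponential average of the $\bar{\rvh}_s=-\eta(1-\gamma_s)\bar{\rvg}_s$, the $(1-\gamma)$ factors cancel at steady state, giving $\|\old{\rvd}_t\|=\Theta(\eta\|\bar{\rvg}\|)$ to leading order and independent of $(1-\gamma)$, whereas $\|\rve_t\|=O((1-\gamma)\,\eta\|\bar{\rvg}\|)$ retains one such factor. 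Hence the ratio is $O(1-\gamma_t)\to0$ and $\cos(\Delta_t,\old{\rvd}_t)\to1$. Making this steady-state/relative-rate comparison rigorous (in particular bounding the momentum iterate below, away from stationarity) is the most delicate part.
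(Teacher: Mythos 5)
Your derivation coincides with the paper's own proof up to and including the key identity: the same telescoping $\Delta_t=\sum_{i=1}^{\tau}(\rvd_{t-i}+\bar{\rvh}_{t-i})$, the same unrolling of the recurrence $\rvd_{s+1}=\gamma_{s+1}(\rvd_s+\bar{\rvh}_s)$ down to $\old{\rvd}_t=\rvd_{t-\tau}$, yielding exactly the decomposition \eqref{eq:delay-cor-supp1}. Where you differ is the final limiting step, and there your treatment is more careful than the paper's. The paper concludes in one sentence --- ``when $\gamma_t\to1$, the gradient term vanishes, and $\cos(\Delta_t,\old{\rvd}_t)\to1$'' --- which, read literally, only establishes that the error term $\rve_t$ vanishes in absolute terms. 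You correctly observe that this alone is not sufficient: cosine similarity is scale-invariant, and since $\old{\rvd}_t$ can itself shrink (its driving terms $\bar{\rvh}_k$ carry the same $(1-\gamma_k)$ factors, and near stationarity everything decays), what is actually needed is $\|\rve_t\|/\|\old{\rvd}_t\|\to0$. Your steady-state argument --- that the momentum recursion makes the $(1-\gamma)$ factors cancel inside $\rvd_t$, so $\|\rvd_t\|=\Theta(\eta\|\bar{\rvg}\|)$, whereas every summand of $\rve_t$ retains one factor of $(1-\gamma_k)$, giving a ratio of order $(1-\gamma_t)$ with $\tau$ fixed --- supplies precisely the relative-rate comparison the paper leaves implicit. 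What each approach buys: the paper's version is short and captures the intended intuition; yours turns the conclusion into a verifiable quantitative condition. The residual caveat is the one you flag yourself: making the lower bound on $\|\old{\rvd}_t\|$ rigorous requires a non-degeneracy assumption along the trajectory (e.g., gradient norms bounded below, or a sufficiently stable gradient direction so the exponential average does not self-cancel), which neither you nor the paper states explicitly; at an exact stationary point the cosine is not even defined. So your proposal is correct as a sketch, follows the paper's route for the algebra, and is stronger than the paper's own proof in exactly the place where that proof is hand-wavy, short of fully discharging the delicate step you acknowledge.
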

\begin{proof}
Let us expand the delay term:
\begin{equation}
    \Delta_t = \rvw_t - \rvw_{t-\tau} 
    = \sum_{i=t-\tau+1}^t \rvw_i - \rvw_{i-1}
    = \sum_{i=t-\tau+1}^t \frac{\rvd_i}{\gamma_i}
    = \sum_{i=t-\tau+1}^t \rvd_{i-1} + \bar{\rvh}_{i-1}
    = \sum_{i=1}^\tau \rvd_{t-i} + \bar{\rvh}_{t-i}\ .
\end{equation}
Note, all $\rvd_{t-i} + \bar{\rvh}_{t-i}$ can be written in terms of $\old{\rvd}_t=\rvd_{t-\tau}$:
\begin{align}
    \rvd_{t-i} + \bar{\rvh}_{t-i} &= \Pi_{j=t-\tau+1}^{t-i}\gamma_j\,\rvd_{t-\tau} + \sum_{k=t-\tau}^{t-i}\Pi_{j=k+1}^{t-i}\gamma_j\,\bar{\rvh}_{k}\ ,\\\nonumber
    &=\left(\Pi_{j=t-\tau+1}^{t-i}\gamma_j\right)\rvd_{t-\tau} - \eta\sum_{k=t-\tau}^{t-i}\left(\Pi_{j=k+1}^{t-i}\gamma_j\right)(1-\gamma_{k})\,\bar{\rvg}_{k}\ .
\end{align}
    Now we can write $\Delta_t$ in terms of $\old{\rvd}_t$ as follows:
    \begin{equation}\label{eq:delay-cor-supp1}
    \Delta_t = \sum_{i=1}^{\tau} \left[\left(\Pi_{j=t-\tau+1}^{t-i}\gamma_j\right)\old{\rvd}_t - \eta\sum_{k=t-\tau}^{t-i}\left(\Pi_{j=k+1}^{t-i}\gamma_j\right)(1-\gamma_{k})\,\bar{\rvg}_{k}\right]\ .
\end{equation}
When, $\gamma_t\to 1$, the gradient term vanishes, and $\cos(\Delta_t, \old{\rvd}_t)\to 1$. 
\end{proof}
% \begin{proof}
% Let us expand the delay term:
% \begin{equation}
%     \Delta_t = \rvw_t - \rvw_{t-\tau} 
%     = \sum_{i=t-\tau+1}^t \rvw_i - \rvw_{i-1}
%     = \sum_{i=t-\tau+1}^t \frac{\rvd_i}{\gamma_i}
%     = \sum_{i=t-\tau+1}^t \rvd_{i-1} + \bar{\rvh}_{i-1}
%     = \sum_{i=1}^\tau \rvd_{t-i} + \bar{\rvh}_{t-i}\ .
% \end{equation}
% Note, all $\rvd_{t-i} + \bar{\rvh}_{t-i}$ can be written in terms of $\rvd_t$:
% \begin{equation}
%     \rvd_{t-i} + \bar{\rvh}_{t-i} = \frac{\rvd_t}{\Pi_{j=t-i+1}^t \gamma_j} + \eta\sum_{j=t-i+1}^{t-1}\frac{(1-\gamma_j) \bar{\rvg}_j}{\Pi_{k=t-i+1}^j \gamma_k}\ .
% \end{equation}
%     By algebraic manipulations, we can write $\Delta_t$ in terms of $\rvd_t$ as follows:
%     \begin{equation}\label{eq:delay-cor-supp1}
%     \Delta_t = \sum_{i=1}^{\tau} \left[\frac{\rvd_t}{\Pi_{j=t-i+1}^t \gamma_j} + \eta\sum_{j=t-i+1}^{t-1}\frac{(1-\gamma_j) \bar{\rvg}_j}{\Pi_{k=t-i+1}^j \gamma_k}\right]\ .
% \end{equation}
% When, $\gamma_t\to 1$, the gradient term vanishes, and $\cos(\Delta_t, \rvd_t)\to 1$. 
% \end{proof}

\subsection{Convergence Analysis}\label{app:conv}
In this section, we will prove that \gls{NAG} with delayed gradients converges at a rate of $O(\frac{1}{t})$ similar to the standard gradient descent. The sequence $\gamma_t$ will be set as part of the convergence proof. Even though, due to the delay the faster $O(\frac{1}{t^2})$ rate is not achieved, in practice \gls{NAG} is as effective as the synchronous alternatives despite the staleness in gradients.

\begin{thm}\label{thm:conv1}
    Let $f$ be a convex, $\beta$-smooth function with bounded gradients, then the iterates in~\eqref{eq:nago-supp} with $\eta=\frac{1}{\beta}$ converges at a rate of $O(\frac{1}{t})$.
\end{thm}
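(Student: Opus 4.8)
The plan is to transport the Lyapunov (estimate-sequence) proof of Nesterov's method from \cite{bubeck2015convex} into the delayed-gradient setting, treating the staleness as a controlled perturbation that degrades the accelerated $O(1/t^2)$ rate down to the gradient-descent rate $O(1/t)$. First I would fix $\gamma_t=\frac{t-2}{t}$ (so that $\gamma_1=0$ and the discount factor becomes $1-\gamma_t=\frac{2}{t}$), and introduce an auxiliary sequence $\rvz_t$ together with a potential of the form $\Phi_t=\lambda_t^2\big(f(\rvw_t)-f(\rvw^\ast)\big)+\tfrac{\beta}{2}\|\rvz_t-\rvw^\ast\|^2$ with $\lambda_t\sim t$, mirroring the undelayed argument. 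The role of the $(1-\gamma_t)$ discount introduced in \eqref{eq:nago-supp} is precisely that it makes the effective step at iteration $t$ equal to $\eta(1-\gamma_t)=\frac{2}{\beta t}$, i.e.\ a shrinking step that keeps the telescoping in the Lyapunov argument intact even though the gradient is stale; this is what I would emphasise as the structural reason the modified update, rather than vanilla \gls{NAG}, is amenable to analysis.

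The crux is that $\bar{\rvg}_t$ is evaluated at the delayed look-ahead point $\bar{\rvw}_t+\old{\rvd}_t$ rather than at the ideal point $\rvw_t+\rvd_t$. I would therefore begin from the $\beta$-smoothness inequality applied at the point where the gradient is actually taken, and couple it with the two convexity inequalities for $\bar{\rvg}_t$ tested against $\rvw_t$ and against $\rvw^\ast$. Because all three inequalities are anchored at $\bar{\rvw}_t+\old{\rvd}_t$ instead of $\rvw_t+\rvd_t$, every term that the clean proof would cancel now leaves a residual inner product with the discrepancy $\old{\Delta}_t=\Delta_t+\rvd_t-\old{\rvd}_t$ from \eqref{eq:notations}. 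Collecting these, the per-iteration recursion for $\Phi_t$ acquires an additive error essentially of the form $\lambda_t\,\langle\bar{\rvg}_t,\old{\Delta}_t\rangle$, which by Cauchy--Schwarz and the bounded-gradient hypothesis ($\|\bar{\rvg}_t\|\le G$) is at most $\lambda_t\,G\,\|\old{\Delta}_t\|$.

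It then remains to control the accumulated error. Here I would use bounded gradients to get $\|\bar{\rvh}_t\|=\eta(1-\gamma_t)\|\bar{\rvg}_t\|=O(1/t)$, and, via the telescoped expression for $\Delta_t$ in \eqref{eq:delay-cor-supp1} and the alignment result \proref{pro:delaycor1}, establish the displacement bound $\|\Delta_t\|=\|\old{\Delta}_t\|+O(1/t)=O(1/t)$ over the fixed window of length $\tau$. With $\lambda_t\sim t$ this makes each per-step error $O(1)$, so the total accumulated over $t$ iterations is $O(t)$; dividing the telescoped bound by $\lambda_t^2\sim t^2$ yields $f(\rvw_t)-f(\rvw^\ast)=O(1/t)$, exactly the claimed degradation from acceleration to the gradient-descent rate. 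The main obstacle I anticipate is this displacement bound: the momentum recursion $\rvd_{t+1}=\gamma_{t+1}(\rvd_t+\bar{\rvh}_t)$ has coefficients $\gamma_{t+1}\to1$, so a naive triangle-inequality estimate lets the momentum accumulate and only gives $\|\rvd_t\|=O(1)$, hence $\|\Delta_t\|=O(1)$, which is too weak and would destroy convergence. Making $\|\Delta_t\|=O(1/t)$ rigorous therefore requires carefully exploiting the discounted step together with the product structure $\prod_{j=3}^{t}\gamma_j=\frac{2}{t(t-1)}$ induced by $\gamma_t=\frac{t-2}{t}$, so that the accumulated momentum is bounded at the correct order; verifying that the resulting cumulative error sits exactly at the $O(t)$ threshold---large enough to cost the acceleration, but small enough to preserve the $O(1/t)$ guarantee---is the delicate part of the argument.
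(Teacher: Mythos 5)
Your proposal has a genuine gap, and it sits at the center of the argument rather than in the technical periphery. You assume that the discounted update \eqref{eq:nago-supp} still satisfies the \emph{accelerated} (quadratic-weight) Lyapunov recursion---potential $\Phi_t=\lambda_t^2\,\bigl(f(\rvw_t)-f(\rvw^\ast)\bigr)+\tfrac{\beta}{2}\|\rvz_t-\rvw^\ast\|^2$ with $\lambda_t\sim t$---so that the only per-step residual is the delay term $\lambda_t\langle\bar{\rvg}_t,\old{\Delta}_t\rangle$. That premise is false: the $(1-\gamma_t)$ discount breaks the quadratic-weight telescoping \emph{even for exact, undelayed gradients}. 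Concretely, set $\rvz_t=\rvw_t+\lambda_t\rvd_t$ and impose the telescoping condition $1+\lambda_{t+1}\gamma_{t+1}=\lambda_t$ (which forces $\gamma_t=\frac{t-2}{t}$ when $\lambda_t=t$); then the dual sequence moves by
\begin{equation*}
\rvz_{t+1}-\rvz_t=\lambda_t\bar{\rvh}_t=-\frac{\lambda_t(1-\gamma_t)}{\beta}\,\bar{\rvg}_t=-\frac{2}{\beta}\,\bar{\rvg}_t\ ,
\end{equation*}
a \emph{constant-size} increment, whereas the accelerated square-completion requires an increment $-\frac{\lambda_t}{\beta}\bar{\rvg}_t$ growing like $t$. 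If you nevertheless multiply the per-step inequality by $\lambda_t$ to install quadratic weights, the cross term $\lambda_t\,\bar{\rvg}_t\cdot(\rvz_t-\rvw^\ast)$ can only be absorbed by putting a coefficient of order $\lambda_t\beta$ in front of the difference $\|\rvz_t-\rvw^\ast\|^2-\|\rvz_{t+1}-\rvw^\ast\|^2$; a sum of such differences with growing coefficients does not telescope, and Abel summation leaves a residual of order $\sum_s\|\rvz_{s+1}-\rvw^\ast\|^2$ that your proposal has no means to control (boundedness of $\rvz_s$ is not available from the assumptions). So the recursion you intend to perturb does not hold in the first place; the error you would actually incur is not the delay term you budget for.

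This is exactly where the paper's proof departs from yours: it gives up quadratic weights entirely. It combines the smoothness and convexity inequalities with weights $(\lambda_t-1)$ and $1$, and exploits that $\lambda_t(1-\gamma_t)=2$ is \emph{constant}, so that $\lambda_t\delta_{t+1}-\lambda_{t-1}\delta_t\le\Omega_t+\tfrac12(\rvu_t-\rvu_{t+1})$ telescopes with linear weights; the $O(\frac1t)$ rate then comes from dividing by $\lambda_t=t$, with the delay contributing only $\sum_k\Omega_k=O(\ln t)$. In other words, the discount is not a device that keeps the accelerated telescoping intact under staleness---it is a device that trades the accelerated recursion for a gradient-descent-type one, and your accounting (``error $O(t)$ total, divide by $t^2$'') attributes the rate to the wrong mechanism. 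A further caution on the step you flag as delicate: unrolling the step recursion $\rvs_{t+1}=\gamma_{t+1}\rvs_t+\bar{\rvh}_{t+1}$ (where $\rvs_t=\rvw_{t+1}-\rvw_t$) with the product weights $\prod_{j=k+1}^{t}\gamma_j=\frac{k(k-1)}{t(t-1)}$ bounds the forcing sum by $\frac{2G}{\beta\,t(t-1)}\sum_{k}(k-1)=O(1)$, not $O(\frac1t)$, so the product structure alone cannot deliver $\|\old{\Delta}_t\|=O(\frac1t)$; the paper instead argues this bound by induction on $\|\rvw_{t+1}-\rvw_t\|$ under the bounded-gradient assumption, and any repaired version of your argument would need that (or an equivalent) ingredient as well.
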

\begin{proof}
Our proof largely follows the proof of~\cite{bubeck2015convex} while adopting the delayed gradients. Let us first introduce a few useful inequalities:
\begin{align}
    f(\rvx) - f(\rvy) - \nabla f(\rvy)^T(\rvx - \rvy) &\le \frac{\beta}{2} \|\rvx- \rvy\|^2\ , \qquad&\mbox{$f$ is $\beta$-smooth}\\\nonumber
    f(\rvx) - f(\rvy) &\le \nabla f(\rvx)^T (\rvx - \rvy)\ ,\qquad&\mbox{convex}\\\nonumber
    \|\nabla f(\rvx) - \nabla f(\rvy)\| &\le \beta \|\rvx - \rvy\|\ .\qquad&\mbox{$\beta$-smooth}
\end{align}
Applying the first inequality above to the update equation, while using \eqref{eq:notations} and $\eta=\frac{1}{\beta}$:
\begin{align}\label{eq:ineq_update}
    f(\rvw_{t+1}) - f(\bar{\rvw}_t + \old{\rvd}_t) &\le \bar{\rvg}_t \cdot \left(\old{\Delta}_t -\frac{1-\gamma_t}{\beta}\bar{\rvg}_t\right) + \frac{\beta}{2} \left\|\old{\Delta}_t - \frac{1-\gamma_t}{\beta}\bar{\rvg}_t\right\|^2\ ,\\\nonumber
    &= \bar{g}_t \cdot \old{\Delta}_t -\frac{1-\gamma_t}{\beta} \|\bar{\rvg}_t\|^2 + \frac{\beta}{2}\left(\|\old{\Delta}_t\|^2 - \frac{2(1-\gamma_t)}{\beta}\bar{\rvg}_t\cdot\old{\Delta}_t + \frac{(1-\gamma_t)^2}{\beta^2}\|\bar{\rvg}_t\|^2\right)\ ,\\\nonumber
    &= \frac{\beta}{2}\|\old{\Delta}_t\|^2 + \gamma_t\bar{\rvg}_t\cdot\old{\Delta}_t + \frac{\gamma_t^2 - 1}{2\beta}\|\bar{\rvg}_t\|^2\ . 
\end{align}
Here, $\cdot$ denotes the dot product. Similarly, using convexity:
\begin{equation}\label{eq:ineq_cvx}
    f(\bar{\rvw}_t + \old{\rvd}_t) - f(\rvw_t) \le \bar{\rvg}_t \cdot (\old{\rvd}_t - \Delta_t) = \bar{\rvg}_t \cdot (\rvd_t - \old{\Delta}_t)\ .
\end{equation}
Now summing~\eqref{eq:ineq_update} and~\eqref{eq:ineq_cvx}:
\begin{equation}\label{eq:ineqA}
    \delta_{t+1} - \delta_t = f(\rvw_{t+1}) - f(\rvw_{t}) \le \frac{\beta}{2}\|\old{\Delta}_t\|^2 + (\gamma_t-1)\bar{\rvg}_t\cdot\old{\Delta}_t + \frac{\gamma_t^2 - 1}{2\beta}\|\bar{\rvg}_t\|^2 + \bar{\rvg}_t\cdot\rvd_t\ .
\end{equation}
where $\delta_{t+1} = f(\rvw_{t+1}) - f(\rvw^*)$. Similarly,
\begin{equation}\label{eq:ineqB}
    \delta_{t+1} = f(\rvw_{t+1}) - f(\rvw^*) \le \frac{\beta}{2}\|\old{\Delta}_t\|^2 + (\gamma_t-1)\bar{\rvg}_t\cdot\old{\Delta}_t + \frac{\gamma_t^2 - 1}{2\beta}\|\bar{\rvg}_t\|^2 + \bar{\rvg}_t\cdot(\rvw_t+ \rvd_t - \rvw^*)\ .
\end{equation}
Now, suppose $\lambda_t > 1$ be a sequence. By multiplying~\eqref{eq:ineqA} by $(\lambda_t -1)$ and adding to~\eqref{eq:ineqB}:
\begin{align}
    \lambda_{t}\delta_{t+1} - (\lambda_t - 1)\delta_t &\le \frac{\lambda_t\beta}{2}\|\old{\Delta}_t\|^2 + \lambda_t(\gamma_t-1)\bar{\rvg}_t\cdot\old{\Delta}_t + \frac{\lambda_t(\gamma_t^2 - 1)}{2\beta}\|\bar{\rvg}_t\|^2 + \bar{\rvg}_t\cdot(\rvw_t+ \lambda_t\rvd_t - \rvw^*)\ ,\\\nonumber
    &= \Omega_t -\frac{\lambda_t(1+\gamma_t)\beta}{2(1-\gamma_t)}\bar{\rvh}_t^2 - \frac{\beta}{1-\gamma_t}\bar{\rvh}_t\cdot(\rvw_t + \lambda_t\rvd_t - \rvw^*)\ ,\qquad&\mbox{$\tA$}\\\nonumber
    &\le \Omega_t -\frac{\lambda_t\beta}{2(1-\gamma_t)}\bar{\rvh}_t^2 - \frac{\beta}{1-\gamma_t}\bar{\rvh}_t\cdot(\rvw_t + \lambda_t\rvd_t - \rvw^*)\ ,\qquad&\mbox{$\tB$}\\\nonumber
    &= \Omega_t -\frac{\beta}{2\lambda_t(1-\gamma_t)}\left(\|\lambda_t\bar{\rvh}_t + \rvw_t + \lambda_t\rvd_t - \rvw^*\|^2 - \|\rvw_t + \lambda_t\rvd_t - \rvw^*\|^2\right)\ ,\qquad&\mbox{$\tC$}
\end{align}  
where, $\tA$ substitutes $\Omega_t = \frac{\lambda_t\beta}{2}\|\old{\Delta}_t\|^2 + \lambda_t(\gamma_t-1)\bar{\rvg}_t\cdot\old{\Delta}_t$, $\tB$ is due to $0<\gamma_t<0, \lambda_t>0, \beta>0$, and $\tC$ follows from $a^2+2ab = (a+b)^2 - b^2$.

Now, to enable telescoping sum, we want,
\begin{equation}
    \lambda_t\bar{\rvh}_t + \rvw_t + \lambda_t\rvd_t - \rvw^* = \rvw_{t+1} + \lambda_{t+1}\rvd_{t+1} - \rvw^* = \rvw_t + \rvd_t + \bar{\rvh}_t + \lambda_{t+1}\gamma_{t+1}(\rvd_t + \bar{\rvh}_t) - \rvw^*\ .
\end{equation}
To this end, we set,
\begin{equation}
    1 + \lambda_{t+1}\gamma_{t+1} = \lambda_t\ .
\end{equation}
Now, let the sequence $\lambda_t = t$, then,
\begin{equation}
    \gamma_t = \frac{t-2}{t}\ ,\qquad\text{and}\qquad
    1-\gamma_t = \frac{2}{t}\ .
\end{equation}
Now, let $\rvu_t = \frac{\beta}{2}\|\rvw_t + \lambda\rvd_t - \rvw^*\|^2$. Then,
\begin{align}
    \lambda_{t}\delta_{t+1} - (\lambda_t - 1)\delta_t &\le \Omega_t + \frac{1}{\lambda_t(1-\gamma_t)}(\rvu_t - \rvu_{t+1})\ ,\\\nonumber
    \lambda_t\delta_{t+1} - \lambda_{t-1}\delta_{t} &\le \Omega_t + \frac{1}{2}(\rvu_t - \rvu_{t+1})\ ,\\\nonumber
    \lambda_t\delta_{t+1} - \lambda_0\delta_1 &\le \sum_{k=1}^t \Omega_k + \frac{1}{2}(\rvu_1 - \rvu_{t+1})\ ,\qquad&\mbox{$\sum_{k=1}^t$ for both sides}\\\nonumber
    \lambda_t\delta_{t+1} &\le \sum_{k=1}^t \Omega_k + \frac{1}{2}\rvu_1\ ,\qquad&\mbox{$\lambda_0 = 0, \rvu_{t+1}\ge 0$}\\\nonumber
    \delta_{t+1} &\le \frac{1}{t}\sum_{k=1}^t\Omega_k + \frac{\beta}{2t}\|\rvw_1 - \rvw^*\|^2\ ,\qquad&\mbox{$\gamma_1 = 0$}
\end{align}
To have the rate $O(\frac{1}{t})$, it remains to show that $\sum_{k=1}^t\Omega_k$ grows much slower than $O(t)$. To this end, using the bounded gradients assumption we show $\|\rvw_{t+1} - \rvw_{t}\| = O(\frac{1}{t})$. We prove this by induction. The base case for $t\le \tau$ can be enforced using an appropriate warmup phase. 
Suppose $\|\rvw_{t} - \rvw_{t-1}\| = O(\frac{1}{t})$. Then,
\begin{align}
    \|\rvw_{t+1} - \rvw_t\|^2 &= \|\rvw_t + \rvd_t + \bar{\rvh}_t - \rvw_t\|^2\ ,\\\nonumber
    &= \|\rvd_t + \bar{\rvh}_t\|^2\ ,\\\nonumber
    &=\gamma_t^2\|\rvw_t - \rvw_{t-1}\|^2 + \frac{(1-\gamma_t)^2}{\beta^2}\|\bar{\rvg}_t\|^2 - \frac{2\gamma_t(1-\gamma_t)}{\beta}\bar{\rvg}_t\cdot(\rvw_t - \rvw_{t-1})\ ,\\\nonumber
    &\le \gamma_t^2\|\rvw_t - \rvw_{t-1}\|^2 + \frac{(1-\gamma_t)^2}{\beta^2}\|\bar{\rvg}_t\|^2 + \frac{2\gamma_t(1-\gamma_t)}{\beta}\|\bar{\rvg}_t\|\|\rvw_t - \rvw_{t-1}\|\ ,\\\nonumber
    &= O\left(\frac{1}{t^2}\right)\ .\qquad\mbox{$\gamma_t,\|\bar{\rvg}_t\|=O(1), 1-\gamma_t=O(\frac{1}{t}), \text{and}, \|\rvw_{t} - \rvw_{t-1}\| = O(\frac{1}{t})$}
\end{align}
Now, it is easy to see that $\|\rvd_{t} - \old{\rvd}_{t}\| = O(\frac{1}{t})$ as $\gamma_t=O(1)$ and $\|\rvw_{t} - \rvw_{t-1}\| = O(\frac{1}{t})$. Consequently, $\|\old{\Delta}_t\| = O(\frac{1}{t})$ due to bounded delay $\tau$. Consider,
\begin{align}
    \Omega_t &= \frac{\lambda_t\beta}{2}\|\old{\Delta}_t\|^2 + \lambda_t(\gamma_t-1)\bar{\rvg}_t\cdot\old{\Delta}_t\ ,\\\nonumber
    &\le \frac{t\beta}{2}\|\old{\Delta}_t\|^2 + 2\|\bar{\rvg}_t\|\|\old{\Delta}_t\|\ ,\\\nonumber
    &= O\left(\frac{1}{t}\right)\ .
\end{align}
Therefore,
\begin{equation}
     \sum_{k=1}^t\Omega_k = \sum_{k=1}^tO\left(\frac{1}{k}\right) = O(\ln t)\ ,
\end{equation}
which completes the proof.
\end{proof}

Note here that $\old{\Delta}_t$ depends on the delay $\tau$. Despite it being a constant, it may influence the convergence rate if it is sufficiently large. The proof largely follows the existing proof of NAG and analogously it may be extended to non-convex functions. We leave any such analysis to future work.

Furthermore, it may be intuitive to think of our discounting term $(1-\gamma_t)$ as a learning rate discounting mechanism by considering $\eta_t = \frac{1-\gamma_t}{\beta} = O(\frac{1}{t})$. Analogously, relationships may be drawn from the convergence proof of such methods~\cite{mishchenko2206asynchronous}. Nevertheless, in practical deep learning optimization, having a separate $\eta$ provides greater control over the learning rate schedules and our approach is consistently better than the learning rate discounting mechanism as shown in our experiments.

\section{Experiments}

\subsection{SWARM Training Configuration}
\label{app:swarm_settings}
We use the \swarm{} baseline from~\citet{ryabinin2023swarm} for our large-scale decentralized training framework. For all baselines, the model used is a Transformer language model with architecture similar to that in prior work~\cite{brown2020language, wang2021gpt}. Our \swarm{} configuration consists of 3 worker nodes per stage, for a total of 24 worker nodes. Each worker node is assigned an NVIDIA L4 GPU. We assign 24 trainer nodes to serve the entire pipeline, where each trainer node has a 4-core Intel Cascade Lake CPU with a base clock of 2.2 GHz and 32 GB of RAM. 

We use the following layer configuration for all baselines: embedding dimension of $768$ with $6$ attention heads, the hidden layer dimension of the FFN layer of $3072$ with $8$ layers. Each layer is assigned to its own stage in the pipeline. The microbatch size used is 8 and sequence length is 2048. We employ a learning rate of $2e\text{-}4$ for the synchronous \swarm{} setting and our Nesterov-adapted approach. For the asynchronous variant of \swarm, we use a reduced learning rate of $5e\text{-}5$, due to training instability (and divergence) observed at higher learning rates. 

A linear warmup was used for all baselines up to 1k steps and our Nesterov-adapted approach employs a stage-dependent learning rate up to 2k steps ($T$ in \eqref{eq:no-ws-cor}) and the momentum coefficient $\beta_1$ is set as per \eqref{eq:no-ws-cor}. For the other methods, a default value of $\beta_1=0.9$ is used. All methods were trained for a total of 10k iterations, using a stage-wise all-reduce batch size of 256.

\subsection{Additional Results}
\label{app:additional_results}
\begin{figure*}[h]
    \centering
    \begin{subfigure}{0.33\linewidth}
    \includegraphics[width=0.99\linewidth]{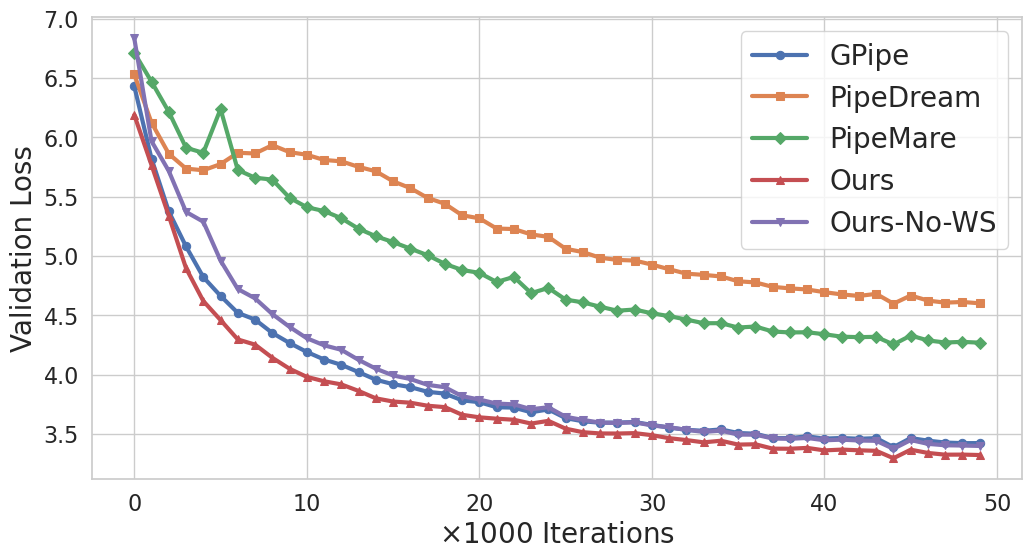}
    \caption{WikiText}
    \end{subfigure}%    
    \begin{subfigure}{0.33\linewidth}
    \includegraphics[width=0.99\linewidth]{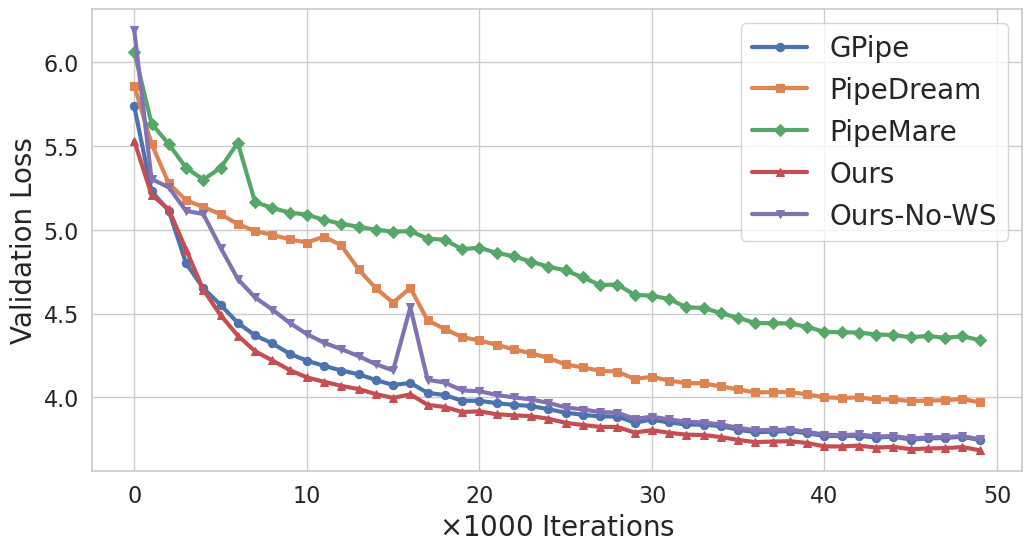}
    \caption{BookCorpus}
    \end{subfigure}% 
    \begin{subfigure}{0.33\linewidth}
    \includegraphics[width=0.99\linewidth]{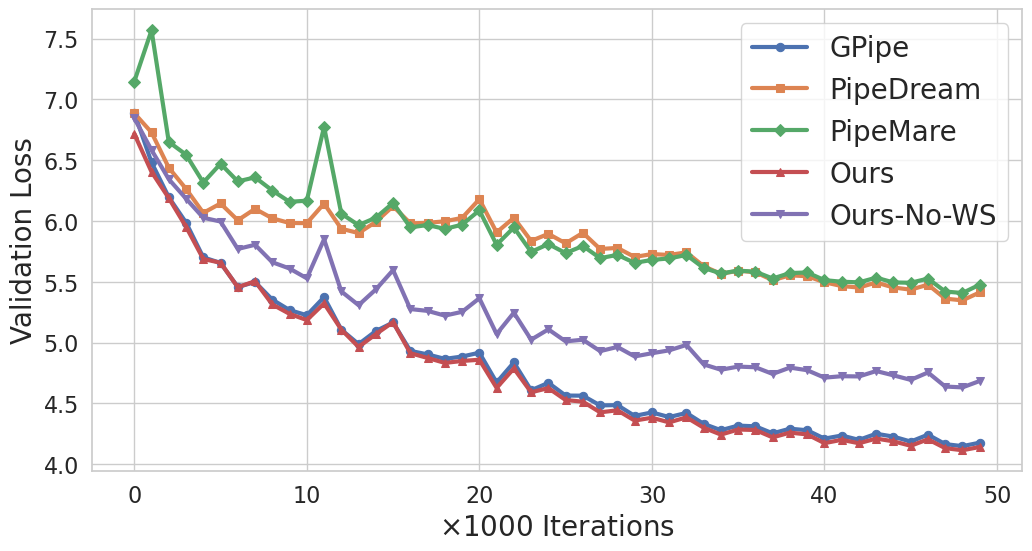}
    \caption{OpenWebText}
    \end{subfigure}
    
    %\vspace{-1ex}
    \caption{\em Validation loss trajectory for the base model. The behaviour is the same as training loss where our method consistently outperforms all methods including GPipe.
    }
    %\vspace{-1ex}
    \label{fig:mainval}
\end{figure*}

\begin{figure}[h]
    \centering
    \begin{subfigure}{0.5\linewidth}
    \includegraphics[width=0.99\linewidth]{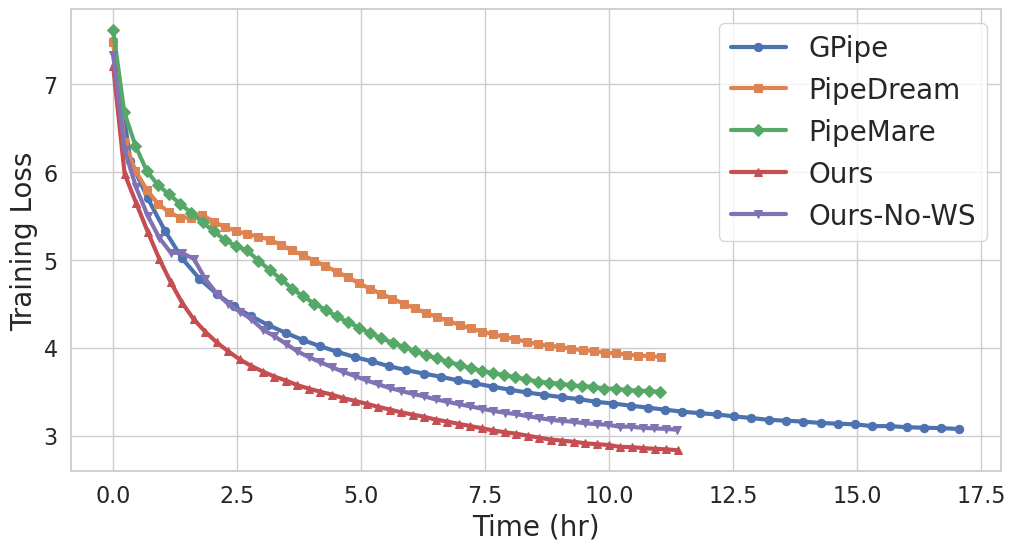}
    % \caption{WikiText}
    \end{subfigure}
    
    %\vspace{-1ex}
    \caption{\em Training loss \vs wall-clock time for the 1B model, where all models were trained for 50k iterations. For faster GPUs (A100 in this case), the runtime discrepancy between GPipe and other asynchronous methods is more pronounced. Note, overhead of our method over PipeDream is negligible.
    }
    %\vspace{-1ex}
    \label{fig:1btime}
\end{figure}

\begin{figure}[h]
    \centering
    \begin{subfigure}{0.33\linewidth}
    \includegraphics[width=0.99\linewidth]{images/wiki_134m_pd_ablation.png}
    \caption{Training Loss}
    \end{subfigure}%
    \begin{subfigure}{0.33\linewidth}
    \includegraphics[width=0.99\linewidth]{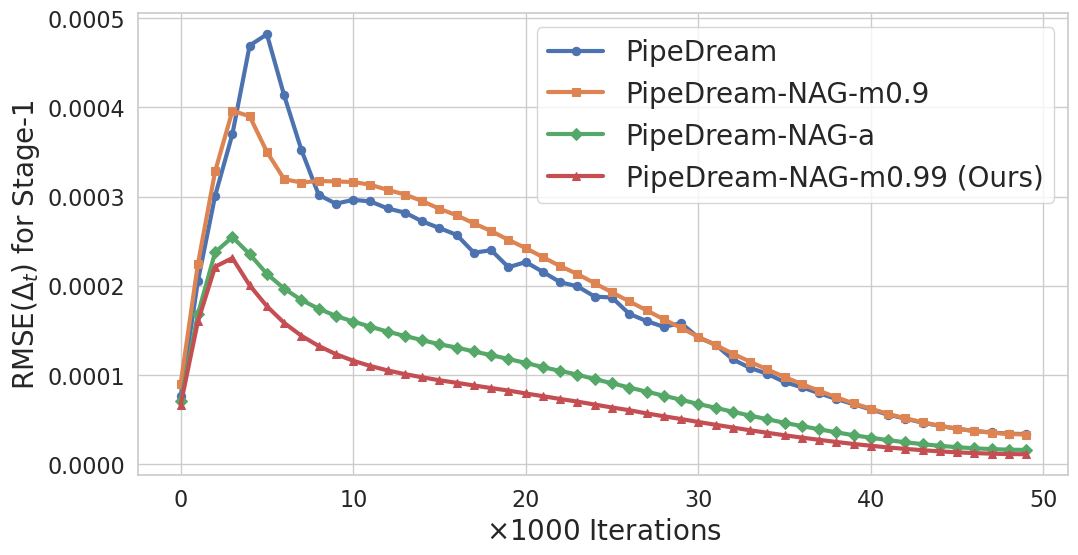}
    \caption{Weight Discrepancy for Stage-1}
    \end{subfigure}%    
    \begin{subfigure}{0.33\linewidth}
    \includegraphics[width=0.99\linewidth]{images/wiki_134m_pd_ablation_cossim_2.png}
    \caption{Look-ahead and delay alignment}
    \end{subfigure}
    
    %\vspace{-1ex}
    \caption{\em Ablation study of our main methods additionally showing weight discrepancy at Stage-1 in (b). Constant momentum coefficient of 0.99 performs slightly better than the adaptive version (denoted with `-a'). It also shows the best delay correction and alignment between look-ahead and delay.
    }
    %\vspace{-1ex}
    \label{fig:abl2}
\end{figure}

\begin{figure}[h]
    \centering
    \begin{subfigure}{0.5\linewidth}
    \includegraphics[width=0.99\linewidth]{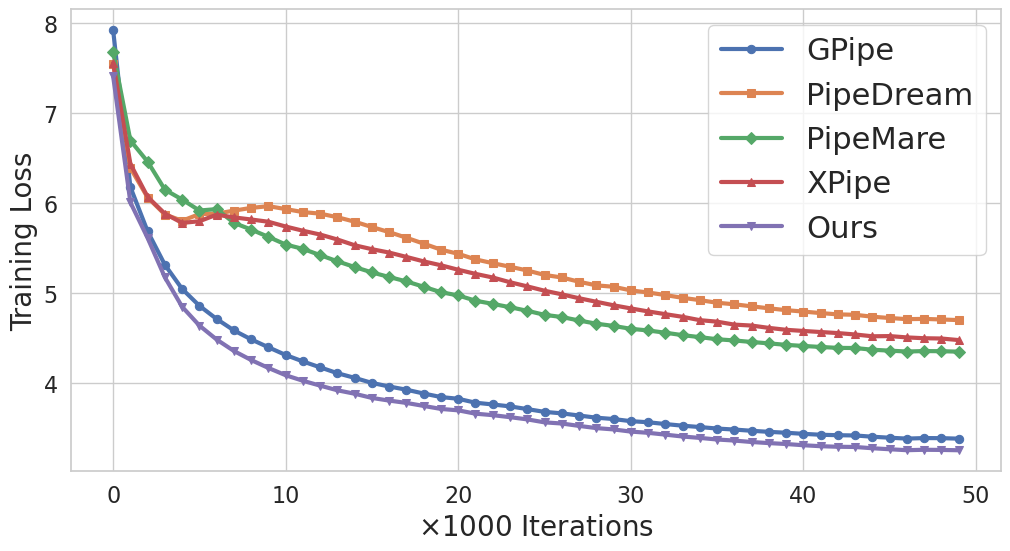}
    % \caption{WikiText}
    \end{subfigure}
    
    %\vspace{-1ex}
    \caption{\em \wiki{} results for the base model for the XPipe method~\cite{guan2019xpipe} for completeness. XPipe is a direct weight prediction method, that extrapolates the previous AdamW step based on the delay. We implemented XPipe following the description from the paper, however, we were unable to reproduce its reported performance on our large-scale language model training. Note, only small-scale image classification experiments were reported in the paper and in our experiments all methods including PipeDream perform similarly (within $\sim 1$ pps) on those datasets.
    }
    %\vspace{-1ex}
    \label{fig:xpipe}
\end{figure}

\begin{figure}[h]
    \centering
    \begin{subfigure}{0.5\linewidth}
    \includegraphics[width=0.99\linewidth]{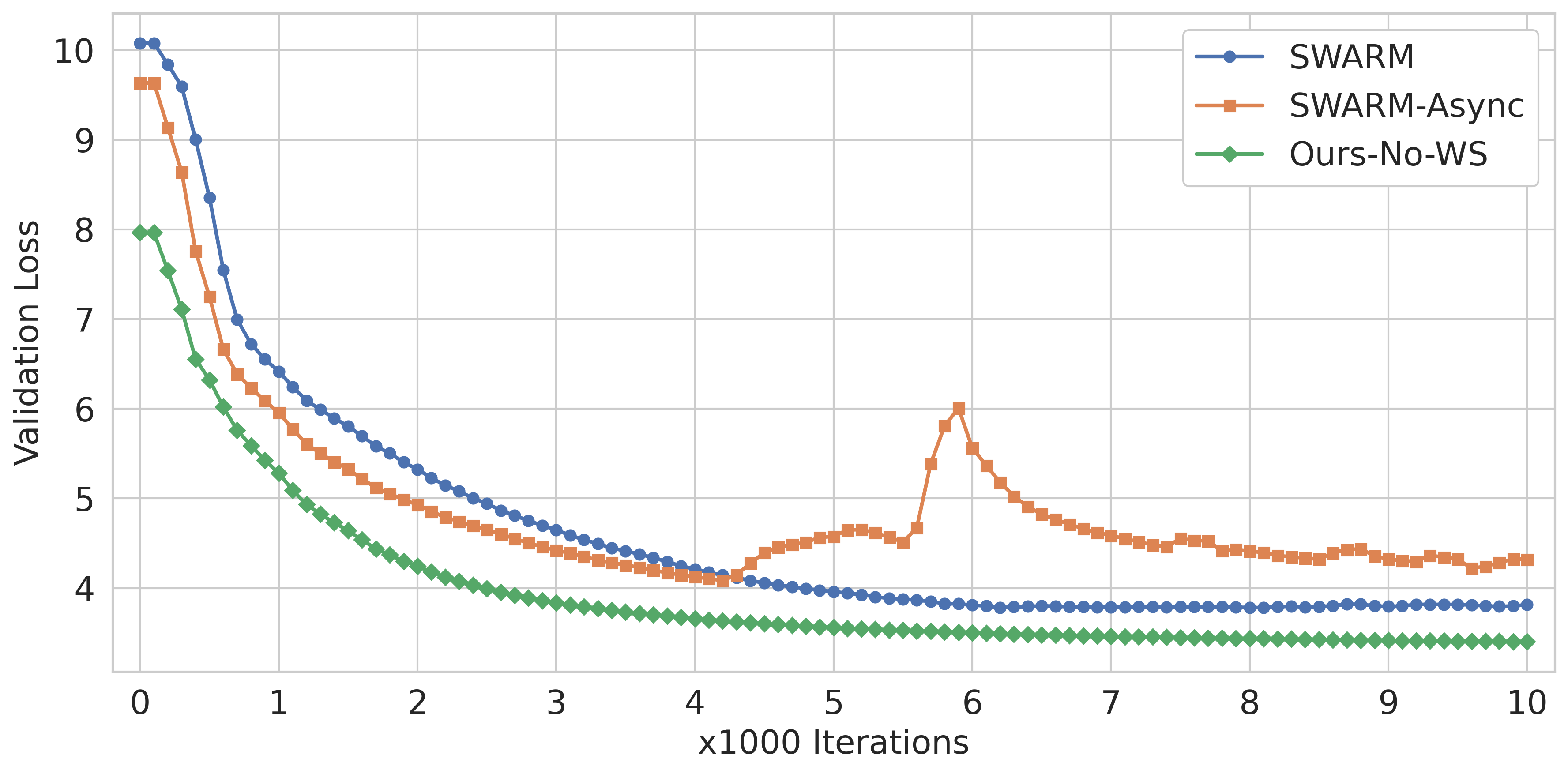}
    % \caption{WikiText}
    \end{subfigure}
    
    %\vspace{-1ex}
    \caption{\em Validation loss on \wiki{} for the \swarm{} experiment. The observed performance follows a similar trend to the training performance where our method significantly outperforms the other methods.
    }
    %\vspace{-1ex}
    \label{fig:swarm_val}
\end{figure}

\end{document}